%% file: main.tex
\documentclass{article} % For LaTeX2e
\usepackage[T1]{fontenc}
\usepackage{iclr2027_conference,times}

\input{math_commands.tex}

\usepackage{url}
\usepackage{amsthm}
\usepackage{graphicx}
\usepackage{booktabs}
\usepackage{array}
\usepackage{xcolor}
\usepackage{colortbl}
\usepackage{algorithm}
\usepackage{algpseudocode}
\usepackage{tabularx}
\usepackage{capt-of}
\usepackage{tcolorbox}
\usepackage{amssymb}
\usepackage{wrapfig}
\usepackage{hyperref}
\tcbuselibrary{skins,breakable}
\newcommand{\secondbest}[1]{\underline{#1}}
\newtheorem{proposition}{Proposition}

\title{Learning to Accumulate Knowledge with\\ Mutual Information}

\author{
\makebox[\textwidth][c]{%
Yuyang Zhao$^{1}$ \quad
Lizi Liao$^{2}$ \quad
Leyang Shen$^{3}$ \quad
\textbf{Xiaoyan Zhao}$^{3}$ \quad
Yang Zhang$^{3}$
}\\
\makebox[\textwidth][c]{%
\textbf{Fuli Feng$^{1}$} \quad
\textbf{Xiangnan He}$^{1}$
}\\[2pt]
\makebox[\textwidth][c]{%
$^{1}$University of Science and Technology of China \quad
$^{2}$Singapore Management University
}\\
\makebox[\textwidth][c]{%
$^{3}$National University of Singapore
}\\
}
\iclrfinalcopy % Uncomment for camera-ready version, but NOT for submission.
\begin{document}

\maketitle
\lhead{Preprint. Under review.} 

\begin{abstract}
\input{sections/abstract}
\end{abstract}

\input{sections/intro}

\input{sections/related_work}

\input{sections/method}

\input{sections/experiments}

\input{sections/conclusion}

% \section*{Limitations}
% Our evaluation focuses on two text-based environments and knowledge banks constructed from bounded trajectory pools. Although we evaluate transfer across frozen executors, these experiments do not establish generalization to multimodal tasks or continually changing environments. Retrieval sweeps assess how existing banks support execution as more entries are supplied and they do not measure sustained knowledge accumulation over time.

\subsection*{AI use statement}

In this work, we used generative AI tools to assist with the implementation of software code and the analysis and interpretation of experimental results. We did not use generative AI tools to formulate the core research ideas, hypotheses, or methodological design of this work.

Additionally, we used generative AI tools to improve the readability and language of the manuscript and to create or refine minor visual elements in some figures.

All AI-assisted code was reviewed and tested by the authors, and all AI-assisted analyses of experimental results were independently verified against the original experimental outputs. All AI-assisted text and visual elements were reviewed and revised by the authors. We take responsibility for the final content of this work, including text, claims, code, analyses, and artifacts produced with the aid of generative AI.

% (This section is \textbf{required} and does not count toward the page limit.)

% In this work, we used generative AI tools for [tasks with required disclosure].
% We have not used generative AI tools for [other tasks with required disclosure],
% and [the rest of the required disclosure tasks] are not applicable to this work.
% Additionally, we used generative AI tools for [tasks with recommended
% disclosure]. We have reviewed all AI-assisted work. [Elaborate. For example, “we
% checked LLM-generated research ideas for potential plagiarism through a manual
% literature survey”, “LLM-generated code was verified and tested for correctness
% by 2 authors”, etc.]. We take responsibility for the final content of this work,
% including text, claims or artifacts produced with the aid of generative AI.

% See the ICLR 2027 AI Policy for Authors for more details. This statement should
% not be more than 1 page.

\subsection*{Reproducibility statement}
Section~\ref{sec:method} defines the feedback signals and training objective. Section~\ref{sec:experiment-setup} describes the ALFWorld and WebShop evaluation protocol. Appendix~\ref{app:agent-prompts} provides the prompts used for knowledge generation and execution. Appendix~\ref{app:experiment-details} reports the hyperparameter settings in Table~\ref{tab:training_hyperparameters}. Appendix~\ref{app:training-algorithm} presents the training pseudocode and GRPO update. Appendix~\ref{app:baseline-reproduction} explains how the baselines were implemented and adapted to the shared evaluation protocol.

\subsection*{Ethics statement}
Human volunteers wrote knowledge entries from sampled agent trajectories to construct the Human-written baseline. Participation was voluntary, and informed consent was obtained from all volunteers.

% \subsubsection*{Acknowledgments}
% Use unnumbered third level headings for the acknowledgments. All
% acknowledgments, including those to funding agencies, go at the end of the paper.

\bibliography{ref}
\bibliographystyle{iclr2027_conference}

\newpage
\appendix
\input{sections/appendix/proofs}
\input{sections/appendix/case_study}
\input{sections/appendix/prompt}
\input{sections/appendix/baseline_reproduction}
\input{sections/appendix/experiment_details}
\input{sections/appendix/pseudo_code}

\end{document}

%% file: math_commands.tex
\usepackage{amsmath,amsfonts,bm}

\def\eqref#1{equation~\ref{#1}}
\def\1{\bm{1}}

\DeclareMathAlphabet{\mathsfit}{\encodingdefault}{\sfdefault}{m}{sl}
\SetMathAlphabet{\mathsfit}{bold}{\encodingdefault}{\sfdefault}{bx}{n}

%% file: sections/abstract.tex
% Large language model (LLM) agents can improve their performance by reusing knowledge distilled from past interactions. However, effective knowledge consumption does not ensure reliable curation, and even strong LLMs can generate guidance with limited downstream utility. Training a curator on task success alone does not explicitly reward preserving the distinct information contained in different experiences. We propose \textbf{Knowledge Weaver}, a reinforcement learning framework that uses mutual information (MI) to guide a compact language model in accumulating reusable knowledge from agent trajectories. It trains the knowledge curator with an MI-based objective to preserve information from distinct agent experiences during knowledge generation. The training objective combines token-level MI rewards with task success and marginal gains from combining knowledge entries. These signals encourage the curator to preserve experience-dependent information while generating knowledge that is individually useful and complementary to other entries. On ALFWorld and WebShop, Knowledge Weaver achieves success rates of 54.0\% and 42.0\% at $k=10$, improving over GRPO by 16.9 and 18.7 percentage points, with the executor frozen. Its knowledge banks also outperform the evaluated prompt-based, established, and human-written banks in overall ALFWorld success rate and WebShop score.

Large language model (LLM) agents can improve their performance by reusing knowledge distilled from past interactions. However, curating new experiences into a knowledge bank that becomes more useful as it grows remains challenging. Effective knowledge accumulation should limit redundant overlap among entries and ensure that new knowledge contributes beyond what the bank already provides. Yet training a curator with Group Relative Policy Optimization (GRPO) on standalone task success can reinforce general guidance even when it duplicates existing knowledge. Therefore, we propose \textbf{Knowledge Weaver}, a reinforcement learning framework that trains a language model to curate reusable knowledge from agent trajectories. We couple feedback inspired by token-wise mutual information (MI) with marginal success rewards to guide knowledge accumulation. Together, these signals encourage the curator to preserve distinct information from experience and produce entries that improve task success when added to existing knowledge. Standalone success rewards also favor entries that are useful on their own. On ALFWorld and WebShop, Knowledge Weaver achieves mean success rates of 54.0\% and 42.0\% with $k=10$ retrieved entries, exceeding GRPO by 16.9 and 18.7 percentage points, respectively. Its knowledge banks also outperform the evaluated prompt-based and established banks, including human-written banks, in overall ALFWorld success rate and WebShop score with the executor frozen. Our codebase is available at \href{https://github.com/LaoKuiZe/Knowledge-Weaver}{https://github.com/LaoKuiZe/Knowledge-Weaver}.

%% file: sections/intro.tex
\section{Introduction}
\label{sec:introduction}

% Large language model (LLM) agents increasingly tackle complex tasks through multi-step reasoning and interaction with external environments \citep{yao2022react,gur2024real,yang2024swe}. Their performance can improve substantially when supplied with high-quality, task-specific knowledge \citep{wang2024agent,li2026skillsbench,ouyang2026reasoningbank}. 
Large language model (LLM) agents interact with external environments to solve complex tasks \citep{yao2022react,gur2024real,yang2024swe,carl}. Reusing knowledge from these interactions offers a route to agent self-improvement, allowing experience from previous tasks to inform future decisions. Prior work shows that high-quality, task-specific knowledge can improve agent performance \citep{wang2024agent,li2026skillsbench,ouyang2026reasoningbank}.
Such knowledge includes reusable workflows and executable skills that guide action sequences \citep{wang2023voyager,wang2024agent,yuan2024craft}, as well as distilled lessons about environmental constraints and effective strategies \citep{majumder2023clin,chen2024automanual}. Stored in external knowledge banks, relevant entries can be retrieved and incorporated into the agent's context without updating the executor's parameters \citep{zheng2024synapse,fu2024autoguide}. This makes knowledge accumulation a means of retaining experience beyond individual interactions and supporting its reuse across tasks.

\begin{figure}[t]
    \centering
    \includegraphics[width=0.88\linewidth]{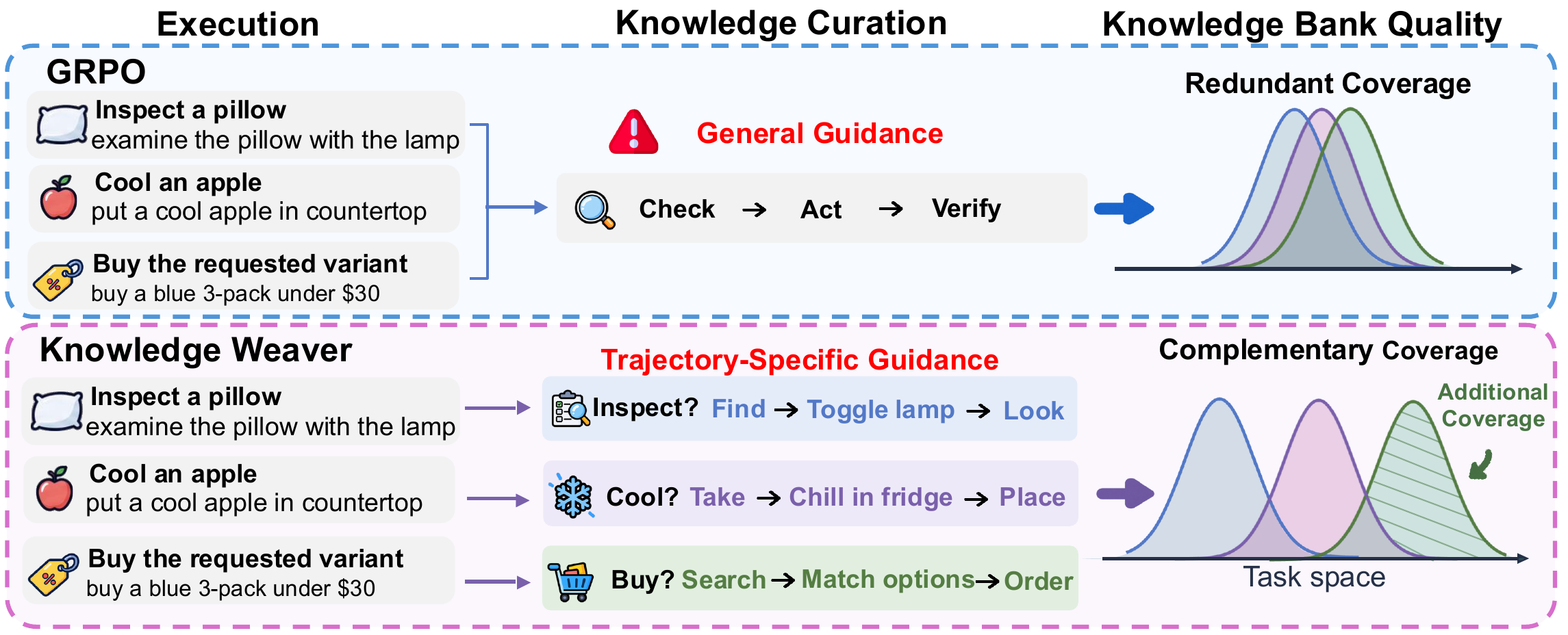}
    \vspace{-0.2cm}
    \caption{Conceptual illustration of general guidance and complementary knowledge. General guidance can remain useful for individual tasks while providing limited additional information across entries. Knowledge Weaver combines MI-inspired feedback with execution rewards to encourage experience-dependent knowledge and incremental utility.}
    \vspace{-0.3cm}
    \label{fig:intro}
\end{figure} 

While these benefits establish the value of using knowledge, producing it reliably remains a separate challenge. Existing work relies on pipeline design or training-based methods for knowledge curation. Prompt-based methods extract reflective lessons, conditional guidelines, and reusable workflows from trajectories \citep{shinn2023reflexion,zhao2024expel,fu2024autoguide,wang2024agent}, then consolidate them through memory or playbook revision \citep{zhang2026agentic,ouyang2026reasoningbank,suzgun2026dynamic}. 
% Reinforcement learning further introduces execution feedback to internalize supplied skills or jointly improve task solving and knowledge curation \citep{lu2026skill0,xia2026skillrl,shi2026skill1,yao2026skillrise}. However, task success alone does not explicitly reward preserving the information that distinguishes different experiences. A curator can therefore produce guidance that helps individual tasks while repeatedly discarding the conditions that made those experiences informative.
Training-based methods integrate knowledge into policy improvement through skill internalization \citep{lu2026skill0} or the joint evolution of task execution and reusable skills \citep{xia2026skillrl,shi2026skill1,yao2026skillrise}. When curation is rewarded through downstream task success, this feedback assesses whether generated knowledge helps execution. 
% 找position，
However, these advances in knowledge construction and use leave a distinct question of how to train a curator to produce better knowledge for a fixed executor. Addressing this question requires considering both the utility of individual entries and what they contribute as knowledge accumulates.

% We propose \textbf{Knowledge Weaver}, an RL framework that trains a language model as a knowledge curator to extract reusable knowledge from agent trajectories. We use mutual information (MI), which measures statistical dependence, to motivate preserving distinctions between source experiences in generated knowledge \citep{cover1991elements}. This makes experience dependence an explicit training target, addressing information loss that execution outcomes alone may not reveal.
% 什么是好的curator， 怎么组成，什么是好的Knowledge Accumulation。减小overlapping，增大incremental

Building on this, we propose \textbf{Knowledge Weaver}, an RL framework that trains a language model as a knowledge curator to extract reusable knowledge from agent trajectories. A good curator should produce knowledge that improves task performance and adds useful information as the bank grows. Effective accumulation therefore calls for both limited redundant overlap and substantial incremental utility. Figure~\ref{fig:intro} illustrates these considerations through knowledge coverage over a task space. Optimizing only for standalone task success can favor general guidance that repeatedly covers the same tasks, yielding substantial overlap and limited additional coverage as the bank grows. Complementary entries extend the bank's coverage, with the shaded region indicating the incremental contribution of a new entry. We therefore approach knowledge accumulation through both the distinct information retained across entries and the additional utility each entry provides.

% Effective knowledge accumulation should improve task performance while increasing the useful information retained as the bank grows, with limited redundant overlap. We therefore reward individual task success and address redundancy through two complementary signals. Marginal success gains assess whether an entry contributes beyond existing knowledge through execution outcomes. Mutual information (MI), which measures statistical dependence \citep{cover1991elements}, motivates preserving distinct information from source experiences during knowledge generation.

Knowledge Weaver couples token-level feedback inspired by mutual information (MI) with marginal success rewards to preserve distinct information and encourage incremental contributions to the bank. The MI-inspired signal contrasts generation probabilities across trajectory contexts, assigning greater credit to context-sensitive tokens than to tokens whose probabilities remain similar across experiences. To reward incremental contributions, we measure the gain in task success when a new entry is added to retrieved knowledge. This marginal reward favors entries that improve execution beyond what the bank already provides. We also reward task success with each entry alone to encourage individual utility. These signals jointly train the curator, whose generated knowledge is retrieved to guide a frozen executor.

% We evaluate Knowledge Weaver on ALFWorld\citep{shridhar2020alfworld} and WebShop\citep{yao2022webshop}. With $k=10$ retrieved entries, Knowledge Weaver achieves success rates of 54.0\% and 42.0\%, exceeding GRPO by 16.9 and 18.7 percentage points, respectively. Its knowledge banks also outperform the evaluated prompt-based and established banks, including human-written banks, on overall ALFWorld success rate and both WebShop metrics. These improvements are obtained with the same frozen executor across comparisons, demonstrating that training knowledge curation can improve downstream execution without changing the executor's parameters.
We evaluate whether learned knowledge improves a frozen executor on ALFWorld, a household task environment \citep{shridhar2020alfworld}, and WebShop, an online shopping environment \citep{yao2022webshop}. Before each task, we retrieve $k=10$ knowledge entries from the bank using the task query and provide them to the frozen executor as guidance. Knowledge Weaver achieves success rates of 54.0\% and 42.0\%, respectively. These results exceed task-success-only GRPO by 16.9 and 18.7 percentage points under the same executor. These gains demonstrate improved task performance and support a curation strategy aimed at reducing redundant overlap among knowledge entries.
% The gains across both environments support training knowledge curation to improve downstream task performance without updating the executor's parameters.

Our contributions are summarized as follows.
\begin{itemize}
    \item \textbf{Redundancy in knowledge curation.}
    We identify a limitation of GRPO using only task success as reward, which can produce general guidance with substantial informational redundancy across knowledge entries.

    \item \textbf{MI-guided curator training.}
    We propose Knowledge Weaver, which combines token-level information feedback with execution rewards to train useful and complementary knowledge generation.

    \item \textbf{Improved downstream performance.}
    On ALFWorld and WebShop, Knowledge Weaver improves SR over task-reward-only GRPO by 16.9 and 18.7 percentage points at $k=10$, respectively. Its banks also exceed human-written banks on the reported
    aggregate metrics.
\end{itemize}

%% file: sections/related_work.tex
\section{Related Work}
\label{sec:related-work}

\vspace{-2mm}
\paragraph{Knowledge Construction from Experience.} LLM agents construct reusable knowledge through memory organization, experience distillation, and skill synthesis. Memory systems extract and organize information for later retrieval \citep{chhikara2025mem0,xu2026mem,zhang2026nextmem}, while experience distillation produces reflective lessons and generalizable strategies \citep{shinn2023reflexion,zhao2024expel,majumder2023clin,ouyang2026reasoningbank}. Structured representations include conditional guidelines \citep{fu2024autoguide,chen2024automanual}, reusable workflows \citep{wang2024agent}, and executable skills \citep{wang2023voyager,yuan2024craft,zheng2025skillweaver}. AutoSkill \citep{yang2026autoskill} maintains skills derived from interaction traces, while Trace2Skill \citep{ni2026trace2skill} consolidates trajectory-level lessons into transferable guidance. Skill-Pro \citep{mi2026skill} uses execution feedback to generate and verify procedural skills without parameter updates. RL-based approaches further learn to use and construct reusable knowledge. Skill0 \citep{lu2026skill0} trains agents to internalize supplied skills, while Skill0.5 \citep{zhu2026skill0} combines general skill internalization with task-specific skill utilization. SkillRL \citep{xia2026skillrl} couples skill distillation and library evolution with executor training. Skill1 \citep{shi2026skill1} jointly learns skill selection, utilization, and distillation, while SkillRise \citep{yao2026skillrise} assigns curation credit through subsequent execution outcomes. Knowledge Weaver complements execution feedback with explicit information feedback to train knowledge generation while keeping the executor fixed.

\vspace{-2mm}
\paragraph{Mutual Information.}
Mutual information (MI) provides a general criterion for learning informative representations and encouraging dependence between related variables. In representation learning, Contrastive Predictive Coding captures information predictive of future observations, while Deep InfoMax promotes dependence between inputs and learned representations \citep{oord2018representation,hjelm2018learning}. InfoGAN uses MI between latent codes and generated observations to encourage interpretable factors of variation \citep{chen2016infogan}. In reinforcement learning, DIAYN maximizes MI between skill identities and visited states to discover distinguishable behaviors \citep{eysenbach2018diversity}. In dialogue generation, MI-based objectives discourage generic responses and favor responses that convey information about the conversational context \citep{li2016diversity,zhang2018generating}. AMI further models source-target dependence through adversarial optimization of forward and backward generation networks \citep{pan2020adversarial}. For LLM agents, RAGEN-2 introduces MI-inspired proxies to assess the dependence of generated reasoning on input prompts \citep{wang2026ragen}. In knowledge curation, input dependence concerns whether generated entries retain the distinct information carried by their source trajectories. Knowledge Weaver applies token-level MI-inspired feedback during curator training to encourage experience-specific content.

%% file: sections/method.tex
\section{Method}
\label{sec:method}

\begin{figure}[t]
\centering
\includegraphics[width=\linewidth]{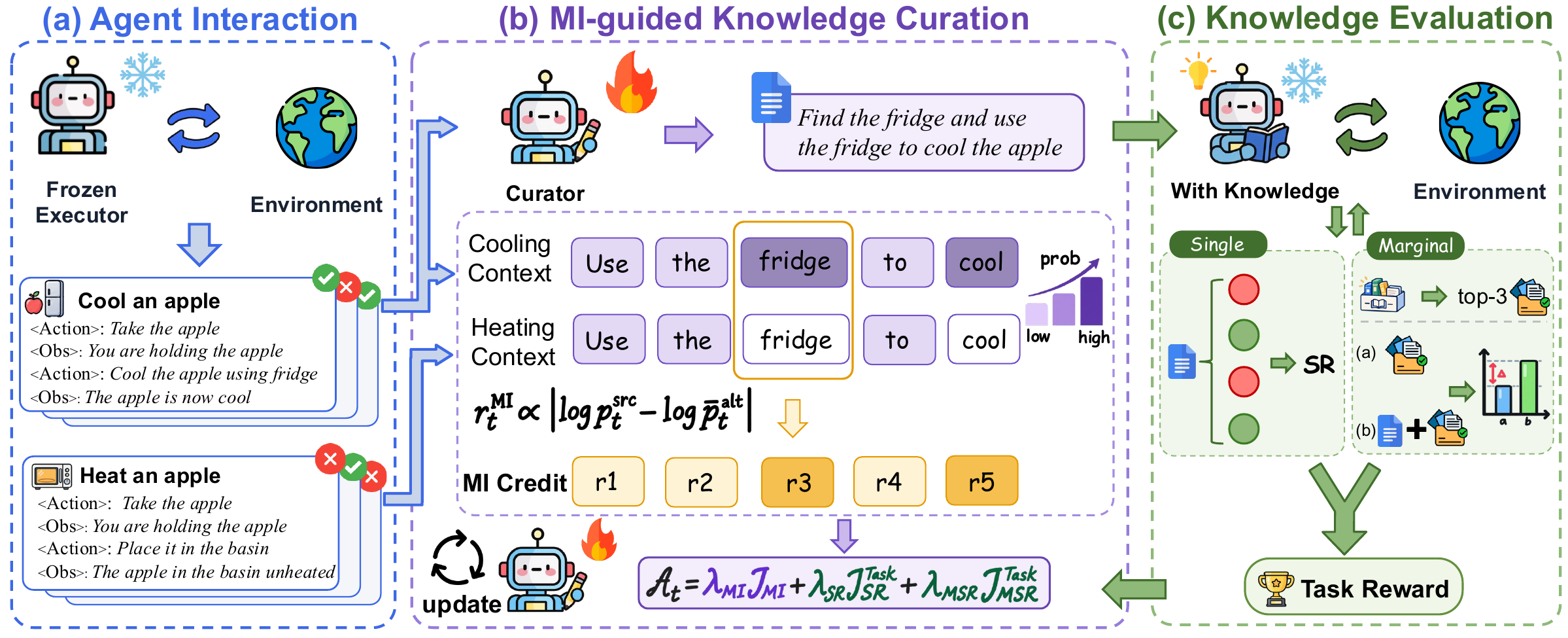}
\vspace{-0.7cm}
\caption{Overview of Knowledge Weaver. Token-level MI feedback promotes experience-dependent knowledge, while execution rewards assess individual utility and marginal gains over retrieved knowledge. Only the curator is trained and the executor remains frozen.}    \vspace{-0.5cm}
\label{fig:framework}
\end{figure}

\subsection{Preliminaries}
\label{sec:preliminaries}
As shown in Figure~\ref{fig:framework}, a trainable curator $\pi_\theta$ generates a knowledge entry $z\sim\pi_\theta(\cdot\mid x)$ from a context $x$ of trajectories sampled from a pool $\mathcal{T}$, each recording a task's observations and actions. Training maintains an initially empty bank $\mathcal{B}$, expanded between updates. After training, the curator generates a separate evaluation bank $\mathcal{C}=\{z^{(i)}\}_{i=1}^{K}$ of valid entries from sampled contexts. A frozen executor uses a knowledge set $S$ to solve task $q$. Writing its binary success as $s(q,S)$, we measure empirical utility on a finite reward-task set $\mathcal{D}$ by
\begin{equation}
U(S)=\frac{1}{|\mathcal{D}|}\sum_{q\in\mathcal{D}}s(q,S).
\label{eq:empirical-utility}
\end{equation}
During evaluation, $S=\mathcal{R}_k(q,\mathcal{C})$ contains at most $k$ entries retrieved once from the initial query and held fixed throughout execution.

\subsection{Effective Knowledge Accumulation}
\label{sec:coverage-model}
Effective accumulation requires both individual utility and contributions beyond existing knowledge. With two equally weighted task types, a routine for an already-supported type and one for an uncovered type can be equally useful alone, yet only the latter extends the bank's capabilities.

We formalize this distinction with a nonnegative, integrable utility profile $f_i(u)$ for each entry $z^{(i)}$ over an idealized task coordinate $u\in\mathbb{R}$. The profile describes where and how strongly an entry helps; its area represents total individual utility. To avoid counting the same available utility repeatedly, bank coverage takes the best profile at each location,
% 这里之前需要更多的assumption
\begin{equation}
\mathcal{E}(\mathcal{C})=\int_{\mathbb{R}}\max_{1\le i\le K}f_i(u)\,du.
\label{eq:knowledge-quality}
\end{equation}
Thus, repeated profiles add no coverage. This model assumes ideal selection of an entry without retrieval errors or interactions between entries; it illustrates utility overlap without predicting execution success. For a tractable comparison, use Gaussian profiles with fixed centers $c_1<\cdots<c_K$, $K\ge2$, common height $h>0$, and width $\sigma>0$,
\begin{equation}
f_i(u;h,\sigma)=h\exp\!\left(-\frac{(u-c_i)^2}{2\sigma^2}\right).
\label{eq:utility-profile}
\end{equation}
The center locates peak utility, while height and width describe its strength and extent. Each entry has total utility $A=\sqrt{2\pi}\sigma h$; write the resulting coverage as $\mathcal{E}(h,\sigma)$.
\begin{proposition}[Utility scaling]
\label{prop:height}
For fixed centers and width, $\mathcal{E}(h,\sigma)=h\mathcal{E}(1,\sigma)$ is strictly increasing in $h>0$.
\end{proposition}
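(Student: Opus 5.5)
The plan is to show that $\mathcal{E}(1,\sigma)$ is finite and strictly positive, after which linearity in $h$ gives both claims at once.

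The first step is homogeneity. From \eqref{eq:utility-profile}, the profiles factor as $f_i(u;h,\sigma)=h\,f_i(u;1,\sigma)$. Since $h>0$, multiplication by $h$ is strictly increasing on $\mathbb{R}$, so it commutes with the maximum over $i$:
\[
\max_{1\le i\le K} f_i(u;h,\sigma)=h\max_{1\le i\le K} f_i(u;1,\sigma)
\]
pointwise in $u$. Integrating this identity and pulling the constant out of the integral in \eqref{eq:knowledge-quality} gives $\mathcal{E}(h,\sigma)=h\,\mathcal{E}(1,\sigma)$. For this to be a statement about real numbers, the integral must be finite.

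The second step establishes finiteness. The pointwise maximum of finitely many nonnegative functions is bounded by their sum, so
\[
0\le \mathcal{E}(1,\sigma)\le \sum_{i=1}^{K}\int_{\mathbb{R}} f_i(u;1,\sigma)\,du = K\sqrt{2\pi}\,\sigma<\infty .
\]
The maximum is also measurable, being the maximum of finitely many continuous functions.

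The third step, which carries the real content, is strict positivity. The maximum is bounded below by any single profile, for example $f_1$, so $\mathcal{E}(1,\sigma)\ge \sqrt{2\pi}\,\sigma>0$. This holds because $\sigma>0$ and the Gaussian is strictly positive everywhere.

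Combining the steps, for $0<h<h'$ we get $\mathcal{E}(h',\sigma)-\mathcal{E}(h,\sigma)=(h'-h)\,\mathcal{E}(1,\sigma)>0$, which is strict monotonicity. There is no real obstacle in this argument. The only point needing care is justifying that the maximum commutes with positive scaling, and that $\mathcal{E}(1,\sigma)$ is neither zero nor infinite, so that the linear dependence on $h$ is genuinely strictly increasing.
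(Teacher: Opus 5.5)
Your proposal is correct and follows the paper's proof essentially step for step. You pull the positive factor $h$ out of the pointwise maximum and the integral, then bound $\mathcal{E}(1,\sigma)$ below by a single profile and above by the sum of profiles, giving $0<\sqrt{2\pi}\,\sigma\le\mathcal{E}(1,\sigma)\le K\sqrt{2\pi}\,\sigma<\infty$; your measurability remark and the explicit difference $(h'-h)\,\mathcal{E}(1,\sigma)>0$ only spell out details the paper leaves implicit.
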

To compare overlap at equal individual utility, we instead fix each profile's area, so height must decrease as width grows.
\begin{proposition}[Coverage under a fixed utility budget]
\label{prop:width}
For fixed centers and per-entry utility $A>0$, let $h_A(\sigma)=A/(\sqrt{2\pi}\sigma)$. Then $\mathcal{E}_A(\sigma)=\mathcal{E}(h_A(\sigma),\sigma)$ strictly decreases with $\sigma>0$.
\end{proposition}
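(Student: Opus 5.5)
The plan is to compute $\mathcal{E}_A(\sigma)$ in closed form by exploiting the fact that all profiles share the same height and width. For equal-height Gaussians, $\max_i f_i(u)$ is attained by the profile whose center is nearest to $u$. The real line therefore splits into Voronoi cells bounded by the midpoints $m_j=(c_j+c_{j+1})/2$ for $j=1,\dots,K-1$, with $m_0=-\infty$ and $m_K=+\infty$. On the cell $[m_{i-1},m_i]$ the integrand equals $f_i$, so
\begin{equation*}
\mathcal{E}(h,\sigma)=\sum_{i=1}^{K}\int_{m_{i-1}}^{m_i} h\exp\!\left(-\frac{(u-c_i)^2}{2\sigma^2}\right)du .
\end{equation*}
The first step is to justify this decomposition. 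Since $|u-c_i|<|u-c_l|$ implies $f_i(u)>f_l(u)$ when heights and widths are equal, and since the cells of distinct centers overlap only at single points, the decomposition holds. Here I use that the centers are strictly ordered, $c_1<\cdots<c_K$.

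Next I evaluate each piece with the standard normal CDF $\Phi$. Write $d_j=c_{j+1}-c_j>0$. Cell $i$ extends a distance $d_{i-1}/2$ to the left of $c_i$ and $d_i/2$ to the right, with infinite extent at the two ends. Hence the integral over cell $i$ equals $\sqrt{2\pi}\,\sigma h\,\bigl[\Phi(d_i/(2\sigma))-\Phi(-d_{i-1}/(2\sigma))\bigr]$, using the conventions $d_0=d_K=\infty$.

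Summing over $i$ and using $\Phi(x)-\Phi(-x)=2\Phi(x)-1$ to pair the two contributions at each interior midpoint gives
\begin{equation*}
\mathcal{E}(h,\sigma)=\sqrt{2\pi}\,\sigma h\left[1+\sum_{j=1}^{K-1}\left(2\Phi\!\left(\frac{c_{j+1}-c_j}{2\sigma}\right)-1\right)\right].
\end{equation*}
This formula also makes Proposition~\ref{prop:height} immediate, since the expression is linear in $h$. Substituting $h=h_A(\sigma)=A/(\sqrt{2\pi}\sigma)$ cancels the prefactor $\sqrt{2\pi}\sigma h$, leaving
\begin{equation*}
\mathcal{E}_A(\sigma)=A\left[1+\sum_{j=1}^{K-1}\left(2\Phi\!\left(\frac{c_{j+1}-c_j}{2\sigma}\right)-1\right)\right].
\end{equation*}

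The final step is monotonicity. Each argument $(c_{j+1}-c_j)/(2\sigma)$ is strictly decreasing in $\sigma>0$ because the gaps are positive, and $\Phi$ is strictly increasing. Each summand is therefore strictly decreasing in $\sigma$. Because $K\ge2$, the sum contains at least one such term, and together with $A>0$ this gives strict decrease of $\mathcal{E}_A$.

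The only step requiring care is the telescoping bookkeeping at the boundary cells, $i=1$ and $i=K$. An alternative route avoids it: substitute $u=\sigma t$, which rewrites $\mathcal{E}_A(\sigma)$ as $(A/\sqrt{2\pi})\int\max_i e^{-(t-c_i/\sigma)^2/2}\,dt$. One then argues that this coverage of unit Gaussians shrinks as the rescaled centers $c_i/\sigma$ move closer together. However, that argument still needs the same cell computation to obtain strictness, so I would use the explicit $\Phi$ formula.
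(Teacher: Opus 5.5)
Your proposal is correct and follows the paper's proof essentially step for step: the same nearest-center decomposition at the midpoints, and the same $\Phi$-based closed form. Your $A\bigl[1+\sum_j\bigl(2\Phi(\Delta_j/(2\sigma))-1\bigr)\bigr]$ is algebraically identical to the paper's $A\bigl[2\sum_j\Phi(\Delta_j/(2\sigma))-(K-2)\bigr]$; the only difference is that you get strict decrease directly from the monotonicity of $\Phi$, whereas the paper differentiates and shows $d\mathcal{E}_A/d\sigma<0$, which is an equally valid finish.
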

The first result isolates stronger individual utility. In the second, total individual utility remains $KA$, but broader profiles overlap more and yield less bank coverage. Narrowing profiles at fixed height need not improve coverage. A new entry adds coverage only where its utility exceeds the existing envelope (Figure~\ref{fig:intro}). Appendix~\ref{app:proofs} gives both proofs and the marginal-coverage identity.

\paragraph{Why standalone-success GRPO fails.}
Group Relative Policy Optimization (GRPO) \citep{shao2024deepseekmath} samples $G\ge2$ candidates $z^{(g)}$ from a curator snapshot $\pi_{\mathrm{old}}(\cdot\mid x)$ for the same source context $x$. Under outcome supervision, every token $t$ in candidate $g$ shares an advantage, the relative learning signal for the policy update,
\begin{equation}
A^{\mathrm{GRPO}}_{g,t}=\frac{R_g-\bar R}{\sigma_R+\epsilon_{\mathrm{num}}},
\label{eq:grpo-advantage}
\end{equation}
where $R_g$ is the candidate reward, $\bar R$ and $\sigma_R$ are its group's reward mean and standard deviation, and $\epsilon_{\mathrm{num}}>0$ ensures numerical stability.

With standalone success $R_g=U(\{z^{(g)}\})$, this advantage contains no bank information. An already-covered routine receives positive advantage when $R_g>\bar R$, with no preference for an equally successful complementary entry. Shared outcome credit also treats source-specific and generic tokens alike. We therefore introduce bank-relative gains to reward contributions beyond existing knowledge and source-dependent credit to encourage retention of the distinct information in each experience. These goals motivate two complementary signals. Marginal success feedback rewards contributions beyond existing knowledge (Section~\ref{sec:mi-training}), while MI-inspired token credit encourages retention of source-dependent information (Section~\ref{sec:token-mi}).

% 这里公式名字要改
\subsection{Marginal Success Feedback}
\label{sec:mi-training}
Marginal success-rate (MSR) feedback encourages additions that improve execution beyond what the bank already supports. For each task $q\in\mathcal{D}$, retrieve up to $m$ entries $B_q=\mathcal{R}_m(q,\mathcal{B})$ from a fixed training-bank snapshot and compute
\begin{equation}
\mathrm{MSR}(z)
=\frac{1}{|\mathcal{D}|}\sum_{q\in\mathcal{D}}
\left[s(q,B_q\cup\{z\})-s(q,B_q)\right].
\label{eq:marginal-success}
\end{equation}
We also evaluate standalone success $U(\{z\})$, the success rate given only that entry, while $\mathrm{MSR}$ measures change relative to retrieved guidance. For source context $x_i$, candidates and all three conditions (standalone, bank-only, and bank-plus-candidate) share the reward-task set $\mathcal{D}=\mathcal{D}_i$, with $B_q$ fixed per task. The candidate is appended without replacing retrieved entries. An empty bank gives a no-knowledge baseline. Unlike ideal coverage, execution gains can be negative under interference. They operationalize incremental utility without estimating the coverage integral.

Between updates, eligible valid entries absent from the bank are ranked by positive $\mathrm{MSR}$ and admitted under a quota. If none has positive gain, the bank is unchanged. The updated bank is used in the next training batch; Algorithm~\ref{alg:knowledge-weaver} gives the schedule.

\subsection{MI-Inspired Information Feedback}
\label{sec:token-mi}
Marginal success feedback evaluates what an entry contributes beyond the existing bank. We complement this signal with information feedback that encourages the curator to retain the distinct information contained in its input experience. Mutual information, $I(X;Z)=H(Z)-H(Z\mid X)$, with $H$ denoting entropy, motivates measuring how source context $X$ changes uncertainty about generated knowledge $Z$ \citep{cover1991elements}. For an entry $z=(z_1,\ldots,z_T)$ generated from $x_i$ among $N\ge2$ contexts, we contrast token probabilities under source and alternative contexts,
\begin{equation}
\begin{gathered}
\delta_t = \log\frac{(N-1)\pi_{\mathrm{old}}(z_t\mid x_i,z_{<t})}{\sum_{\ell\ne i}\pi_{\mathrm{old}}(z_t\mid x_\ell,z_{<t})}, \quad r_t^{\mathrm{MI}} = b_t\min\left\lbrace c_{\mathrm{MI}},\lambda_{\mathrm{MI}}\lvert\delta_t\rvert\right\rbrace.
\end{gathered}
\label{eq:local-mi-credit}
\end{equation}
The token and its prefix are held fixed when comparing source likelihood with the mean likelihood under alternative contexts. We take the absolute value to reward the magnitude of input dependence, whether the source context increases or decreases a token's likelihood. Generic tokens with similar probabilities across contexts have small $|\delta_t|$ and receive little credit, whereas experience-specific tokens with larger contrasts receive more. The mask $b_t$ limits this credit to knowledge-body tokens, with scale $\lambda_{\mathrm{MI}}$ and cap $c_{\mathrm{MI}}$. Assigning credit locally favors input-specific content over generic wording, encouraging the curator to retain more information from each source experience.

% MSR 和 progressive的写法
\paragraph{Joint feedback and policy update.}
We construct the joint feedback by extending the SR-only task reward used in our GRPO baseline. SR rewards an entry's standalone utility without accounting for overlap with existing knowledge. We first add MI-inspired token credit to encourage preservation of information specific to each source experience. Because downstream evaluation concerns the utility of a knowledge bank, we further incorporate MSR to reward contributions beyond its existing entries. A formatting term additionally rewards valid structure and penalizes malformed outputs. Retaining GRPO's group-relative normalization (Equation~\ref{eq:grpo-advantage}), we combine these signals in a single token-level advantage,
\begin{equation}
A_{g,t}=\frac{\lambda_{\mathrm{SR}}U(\{z^{(g)}\})+\lambda_{\mathrm{MSR}}\mathrm{MSR}(z^{(g)})+r_{\mathrm{fmt}}(z^{(g)})-\bar R_{\mathrm{exec}}}{\sigma_{\mathrm{exec}}+\epsilon_{\mathrm{num}}}+r^{\mathrm{MI}}_{g,t},
\label{eq:combined-advantage}
\end{equation}
where $\bar R_{\mathrm{exec}}$ and $\sigma_{\mathrm{exec}}$ are the within-group mean and standard deviation of the combined SR, MSR, and formatting reward. The weights $\lambda_{\mathrm{SR}}$ and $\lambda_{\mathrm{MSR}}$ control the two utility terms. MSR augments GRPO's reward comparison with each entry's contribution to the bank, while MI supplies local credit for source-dependent content. Invalid candidates receive only the formatting penalty and no MI credit.  We retain GRPO for policy optimization, using the combined feedback to update only the curator through its clipped objective \citep{shao2024deepseekmath}. Optimization details are provided in Appendix~\ref{app:optimizer-details}.

%% file: sections/experiments.tex
\section{Experiments}
\label{sec:experiments}
\subsection{Experimental Setup}
\label{sec:experiment-setup}

\paragraph{Environments.}
We evaluate our method on two benchmarks: ALFWorld \citep{shridhar2020alfworld} and WebShop \citep{yao2022webshop}.
ALFWorld is a text-based environment in which an agent completes household tasks through sequential natural-language interactions. We evaluate knowledge banks on the \texttt{valid\_unseen} split across six canonical task types: Pick, Clean, Heat, Cool, Look, and Pick Two. We report the success rate (SR) for each task type and across all evaluation tasks.
WebShop is a simulated web shopping environment in which an agent searches for products, navigates product pages, and purchases an item that satisfies a user's requirements. We sample 100 tasks from the test set to evaluate the knowledge banks and report both the environment score and SR.

\paragraph{Training and Evaluation Protocol.}
Throughout training, we freeze the executor and optimize only the curator. We assign equal weights of $0.5$ to standalone SR and marginal SR. Token-level MI credit is scaled by $0.1$ and capped at $0.15$ after scaling. We use Qwen3.5-4B as the training model. After training, we use the trained curator to construct knowledge banks in a separate generation stage. We then evaluate the resulting banks with the frozen executor. 
Each knowledge bank contains 50 entries(except for SkillRL), with each entry limited to 100 words. To generate each entry, we randomly sample four trajectories from a trajectory pool.
For each evaluation task, we retrieve the top-$k$ entries using the \texttt{all-mpnet-base-v2} encoder \citep{reimers2019sentence}, ranking entries by their cosine similarity to the initial task query. Retrieval occurs once, before the first action, and the selected entries remain fixed throughout the episode.
We use $k=10$ for all banks in the main comparison and evaluate every $k\in\{0,\ldots,15\}$ in the scaling analysis.

\paragraph{Baselines.}
We compare four categories of knowledge banks, following Table~\ref{tab:main-results}:
(1) \emph{LLM Curators}: banks generated directly by Qwen3.5-4B, GPT-5 \citep{openai2025gpt5}, Claude Opus 4.8 \citep{anthropic2026claudeopus48}, and Gemini 3.7 Flash \citep{google2026gemini37flash},
(2) \emph{Established Knowledge Banks}: publicly released banks from SkillRL \citep{xia2026skillrl} and human-written banks refined by human,
(3) \emph{RL-Trained}: GRPO \citep{shao2024deepseekmath} baseline, and SkillRise \citep{yao2026skillrise} and Skill1 \citep{shi2026skill1}, which use execution feedback to improve knowledge generation and refinement,
(4) \emph{Prompt-Based}: ExpeL \citep{zhao2024expel}, which iteratively extracts and refines insights from successful and failed trajectories, and Mem0 \citep{chhikara2025mem0}, which extracts and organizes memories for later retrieval.
We also include zero-shot execution ($k$=0) as a reference without external knowledge.

\subsection{Main Results}
\label{sec:main-results}
% \paragraph{Knowledge Curation Benefits from Training beyond Task Success.}
\paragraph{Improved Performance across Benchmarks.}
Table~\ref{tab:main-results} shows that our method achieves the highest values on all three aggregate metrics with the same frozen executor at $k=10$. Overall SR reaches 54.0\% on ALFWorld and 42.0\% on WebShop, exceeding GRPO by 16.9 and 18.7 percentage points. On WebShop, its mean environment score reaches 66.4, compared with 49.0 for GRPO, indicating higher average reward alongside improved task success. Our method also outperforms SkillRise and Skill1 on both benchmarks. It exceeds SkillRL, the strongest ALFWorld baseline, by 2.8 percentage points in overall SR. Human-written banks achieve the highest baseline mean environment score on WebShop, but our method surpasses them on all three aggregate metrics. Gains over GRPO span five of six ALFWorld task types, although performance on Pick Two remains lower. These results support training knowledge curators with feedback beyond standalone task success.

% human-written 具体是怎么生成的，放在附录
\paragraph{Strong Base Models Do Not Guarantee Useful Knowledge.}
In Table~\ref{tab:main-results}, we evaluate strong general-purpose models as knowledge curators with the same frozen executor. Our trained Qwen3.5-4B curator outperforms GPT-5, Claude Opus 4.8, and Gemini 3.7 Flash on all three aggregate metrics. On ALFWorld, GPT-5 and Claude banks achieve overall SRs of 40.3\% and 37.1\%, respectively, above the zero-shot SR of 24.6\%. On WebShop, however, their banks yield SRs of 18.7\% and 24.7\%, below the zero-shot SR of 27.3\%. Gemini achieves the highest WebShop SR among these general-purpose curators (35.7\%), whereas its overall ALFWorld SR matches the zero-shot mean (24.6\%). These results show that strong general capabilities do not necessarily translate into knowledge that benefits downstream execution.
\input{tables/main_results}

\subsection{Knowledge Bank Quality Analysis}
\label{sec:knowledge-quality}

\begin{figure}[htbp]
    \centering
    \includegraphics[width=\linewidth]{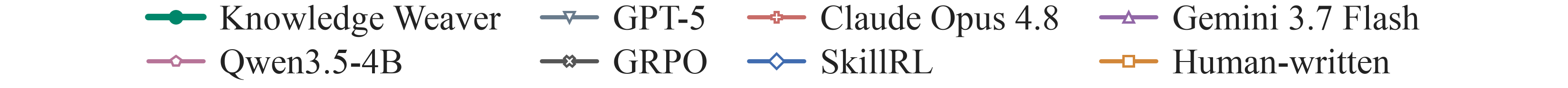}
    \par
    \begin{minipage}[t]{0.49\linewidth}
        \centering
        \includegraphics[width=\linewidth]{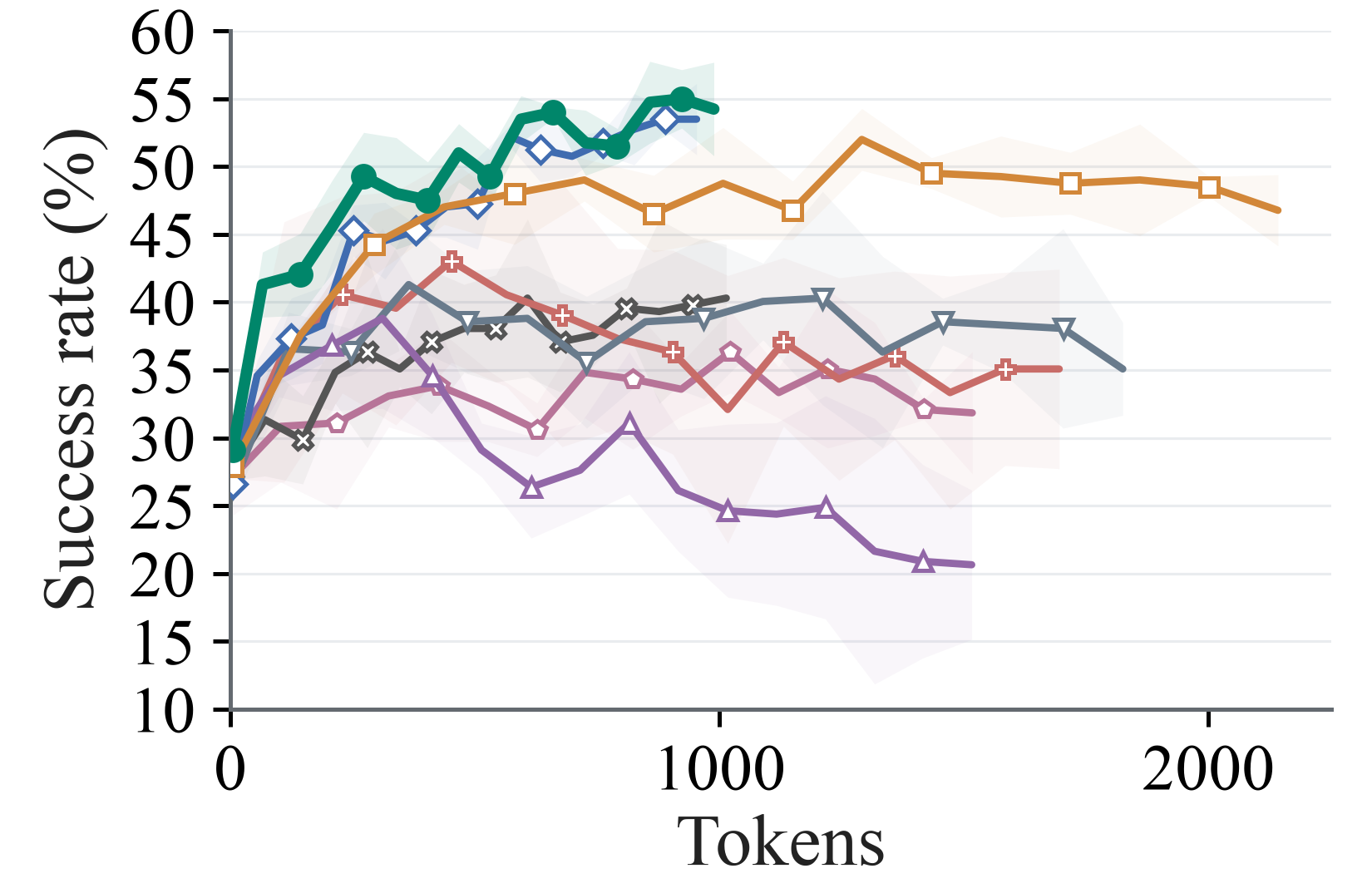}
        \par (a) ALFWorld
    \end{minipage}\hfill
    \begin{minipage}[t]{0.49\linewidth}
        \centering
        \includegraphics[width=\linewidth]{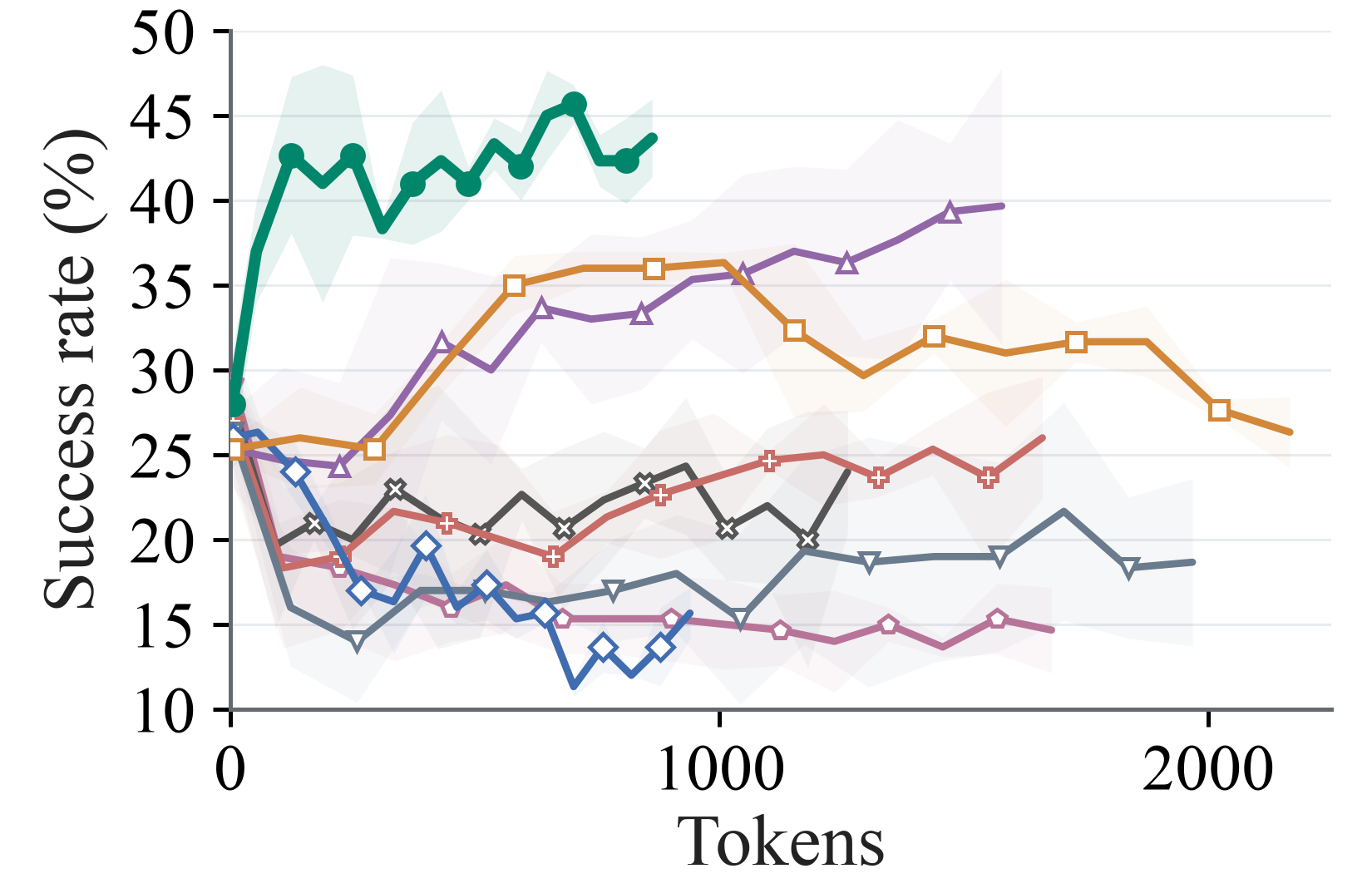}
        \par (b) WebShop
    \end{minipage}
    \vspace{-1mm}
    \caption{Success rate as the number of retrieved knowledge entries $k$ varies from 0 to 15. The horizontal axis reports the number of tokens in the formatted knowledge block injected into the frozen executor, averaged over tasks and runs. Lines and shading show the mean and one standard deviation across three runs. Each curve uses its own no-knowledge control at $k=0$.}
    \label{fig:knowledge-tokens}
\end{figure}

\paragraph{Useful Knowledge Requires Relatively Little Context.}
Figure~\ref{fig:knowledge-tokens} examines whether the main-table gains persist when the number of retrieved entries changes. Knowledge Weaver achieves the highest mean WebShop SR among the eight plotted methods at every positive retrieval count. At $k=10$, its knowledge context averages approximately 660 tokens on ALFWorld and 594 on WebShop, less than half the corresponding human-written context lengths. It nevertheless achieves higher SR on both benchmarks. The advantage therefore accompanies compact guidance, rather than a larger volume of injected text. As retrieval increases from $k=9$ to $15$, SR remains within 51.5-55.0\% on ALFWorld and 42.0-45.7\% on WebShop. The gains persist across this range despite fluctuations, reducing dependence on a single retrieval setting. ALFWorld performance is less uniformly dominant: human-written banks perform better at several smaller retrieval counts, and SkillRL slightly leads at $k=12$.

\paragraph{Additional Knowledge Can Help or Hinder Execution.}
Increasing retrieval exposes clear differences in how banks support the executor. On ALFWorld, Gemini's SR declines from 38.8\% at $k=3$ to 20.6\% at $k=15$, whereas Knowledge Weaver largely retains its gains. On WebShop, Qwen3.5-4B, GPT-5, Claude, and GRPO remain below their own no-knowledge controls at every positive retrieval count. Their additional context therefore fails to recover the performance lost when guidance is introduced. This pattern is consistent with unhelpful or conflicting guidance, although the curves alone cannot distinguish interference from retrieval mismatch or other content effects. Effective accumulation consequently requires assessing how entries work together, as well as whether individual entries appear useful. This observation motivates the marginal-utility objective and its contribution is examined alongside MI in the ablation. These sweeps retrieve from fixed banks, with observed token lengths rather than matched token budgets.   To complement these execution trends, we quantify whether different knowledge entries tend to be retrieved for the same tasks. At $k=10$, we compute mean pairwise Jaccard overlap between these task sets for entries retrieved at least once, then average over three banks per method. Overlap is lower for Knowledge Weaver than for GRPO, with values of 12.92\% versus 13.58\% on ALFWorld and 15.35\% versus 24.48\% on WebShop. Thus, sustained execution gains with longer context coexist with lower retrieval overlap at $k=10$. These observations motivate our joint use of MI feedback and marginal success rewards to preserve useful information, with execution benefits examined in Section~\ref{sec:ablations}.

\vspace{-3mm}
\subsection{Ablation Studies}
\label{sec:ablations}

\input{tables/reward_ablation}

\paragraph{MI and MSR Improve Knowledge Utility.}
Table~\ref{tab:reward-components} compares SR-only training with two progressively augmented objectives, SR + MI and SR + MI + MSR. The first extension adds token-level MI, increasing ALFWorld SR from 37.1\% to 46.0\% and WebShop SR from 23.3\% to 40.7\%. WebShop score also improves from 49.0 to 65.8, supporting information feedback as a useful addition to standalone-success training.

Building on SR + MI, the full objective adds MSR to reward contributions beyond existing knowledge. ALFWorld SR increases further to 54.0\%, an additional 8.0 percentage points, comparable to the 8.9-point improvement from adding MI. On WebShop, SR reaches 42.0\% and score reaches 66.4, with a smaller additional SR gain of 1.3 points. Both extensions therefore improve performance, with similarly sized SR gains on ALFWorld and a larger gain from MI on WebShop.

% 标题misleading
\vspace{-3mm}
\paragraph{Knowledge Banks Transfer across Frozen Executors.}
We reuse banks generated by a curator trained with feedback from a frozen Qwen3.5-4B executor. Table~\ref{tab:executor-transfer} reports their performance with eight additional frozen executors at $k=8$. Knowledge Weaver improves ALFWorld SR and both WebShop metrics over no-knowledge execution for all eight executors. Relative to GRPO, it achieves higher ALFWorld SR and WebShop scores for seven of eight executors on each metric, and higher WebShop SR for six, with one tie. GRPO banks improve ALFWorld SR for all eight executors but reduce WebShop SR for four. These results support reusing Knowledge Weaver banks across the evaluated executors without retraining the curator for each target model.
\input{tables/executor_transfer}
% \paragraph{Learned Knowledge Transfers beyond the \lz{Training Executor **?}.}
% The completed comparisons in Table~\ref{tab:executor-transfer} test whether knowledge curated with Qwen3.5-4B also benefits other frozen executors at $k=8$. \lz{Knowledge Weaver improves both benchmarks over no-knowledge execution for Qwen3.5-27B, DeepSeek-V4-Flash, MiniMax-M2.5, and MiniMax-M2.1. **?} Its WebShop banks also outperform GRPO in both score and SR for these executors. The benefits therefore extend beyond the model used to provide training feedback, without retraining the curator for each executor. On ALFWorld, however, GRPO slightly exceeds Knowledge Weaver for Qwen3.5-27B, despite the opposite ordering on WebShop. Transfer thus depends on the interaction between the bank, executor, and environment. These results support reuse across the evaluated models, while leaving the generality of the gains to the pending executor comparisons.

%% file: tables/main_results.tex
\begin{table}[t]
\centering
\small
\setlength{\tabcolsep}{2.5pt}
\renewcommand{\arraystretch}{1}
\caption{Performance of knowledge banks with a frozen Qwen3.5-4B executor on ALFWorld and WebShop. Each bank supplies $k=10$ retrieved entries per task and zero-shot execution uses no external knowledge. We report success rate (SR, \%) and mean environment score on WebShop. All results are averaged over 3 seeds, with standard deviations shown as subscripts for the Overall and WebShop columns. \textbf{Bold} and \secondbest{underlining} mark the best and second-best in each column.}
\label{tab:main-results}
\vspace{2pt}
\begin{tabularx}{0.9\textwidth}{@{}>{\raggedright\arraybackslash}X*{9}{r}@{}}
\toprule
& \multicolumn{7}{c}{\textbf{ALFWorld} (SR\%)} & \multicolumn{2}{c}{\textbf{WebShop}} \\
\cmidrule(r){2-8}\cmidrule(l){9-10}
\textbf{Method} & \multicolumn{1}{c}{Pick} & \multicolumn{1}{c}{Clean} & \multicolumn{1}{c}{Heat} & \multicolumn{1}{c}{Cool} & \multicolumn{1}{c}{Look} & \multicolumn{1}{c}{Pick Two} & \multicolumn{1}{c}{Overall} & \multicolumn{1}{c}{Score} & \multicolumn{1}{c@{}}{SR\%} \\
\midrule
Zero-shot & $30.6$ & $10.8$ & $31.9$ & $30.2$ & $24.1$ & $25.5$ & $24.6_{\pm 1.5}$ & $53.5_{\pm 1.9}$ & $27.3_{\pm 1.5}$ \\
\midrule
\addlinespace[3pt]
\rowcolor{gray!15}[0pt][0pt]
\multicolumn{10}{@{}l}{\scshape LLM curators} \\
Qwen3.5-4B & $40.3$ & $43.0$ & $24.6$ & $41.3$ & $48.1$ & $15.7$ & $36.3_{\pm 3.0}$ & $40.6_{\pm 2.2}$ & $14.7_{\pm 2.1}$ \\
GPT-5 & $40.3$ & $39.8$ & $27.5$ & $\underline{61.9}$ & $50.0$ & $21.6$ & $40.3_{\pm 7.9}$ & $45.2_{\pm 6.8}$ & $18.7_{\pm 7.4}$ \\
Claude Opus 4.8 & $40.3$ & $30.1$ & $40.6$ & $36.5$ & $\underline{72.2}$ & $3.9$ & $37.1_{\pm 6.2}$ & $55.2_{\pm 2.5}$ & $24.7_{\pm 0.6}$ \\
Gemini 3.7 Flash & $25.0$ & $11.8$ & $18.8$ & $25.4$ & $64.8$ & $11.8$ & $24.6_{\pm 6.4}$ & $61.3_{\pm 5.4}$ & $35.7_{\pm 5.9}$ \\
\addlinespace[3pt]
\rowcolor{gray!15}[0pt][0pt]
\multicolumn{10}{@{}l}{\scshape Established Knowledge Banks} \\
SkillRL & $34.7$ & $\mathbf{50.5}$ & $50.7$ & $\mathbf{68.3}$ & $66.7$ & $\mathbf{39.2}$ & $51.2_{\pm 2.4}$ & $36.6_{\pm 3.0}$ & $15.7_{\pm 0.6}$ \\
Human-written & $\mathbf{47.2}$ & $\secondbest{48.4}$ & $\mathbf{63.8}$ & $52.4$ & $55.6$ & $25.5$ & $49.5_{\pm 1.1}$ & $63.1_{\pm 1.5}$ & $32.0_{\pm 1.0}$ \\
\addlinespace[3pt]
\rowcolor{gray!15}[0pt][0pt]
\multicolumn{10}{@{}l}{\scshape RL-Trained} \\
GRPO & $30.6$ & $26.9$ & $34.8$ & $44.4$ & $55.6$ & $\mathbf{39.2}$ & $37.1_{\pm 3.7}$ & $49.0_{\pm 4.8}$ & $23.3_{\pm 2.1}$ \\
% SkillRise & $44.4$ & $21.5$ & $33.3$ & $27.0$ & $38.9$ & $31.4$ & $32.1_{\pm 2.2}$ & $40.0_{\pm 5.6}$ & $18.7_{\pm 3.5}$ \\
SkillRise & $43.8$ & $22.6$ & $32.6$ & $28.6$ & $47.2$ & $29.4$ & $33.2_{\pm 1.6}$ & $40.0_{\pm 5.6}$ & $18.7_{\pm 3.5}$ \\
Skill1 & $37.5$ & $25.8$ & $14.5$ & $34.9$ & $44.4$ & $25.5$ & $29.9_{\pm 8.2}$ & $38.5_{\pm 0.9}$ & $17.0_{\pm 1.0}$ \\
\addlinespace[3pt]
\rowcolor{gray!15}[0pt][0pt]
\multicolumn{10}{@{}l}{\scshape Prompt-based} \\
ExpeL & $38.9$ & $18.3$ & $23.2$ & $27.0$ & $37.0$ & $\secondbest{31.4}$ & $28.4_{\pm 2.7}$ & $58.4_{\pm 1.2}$ & $24.7_{\pm 2.1}$ \\
Mem0 & $36.8$ & $37.1$ & $\underline{52.2}$ & $47.6$ & $47.2$ & $25.5$ & $41.2_{\pm 0.9}$ & $50.1_{\pm 1.3}$ & $19.0_{\pm 2.0}$ \\
\midrule
\rowcolor{blue!8}
\textbf{Our} & $\secondbest{45.8}$ & $\secondbest{48.4}$ & $\mathbf{63.8}$ & $54.0$ & $\mathbf{87.0}$ & $27.5$ & $\mathbf{54.0}_{\pm 0.4}$ & $\mathbf{66.4}_{\pm 2.6}$ & $\mathbf{42.0}_{\pm 2.0}$ \\
\bottomrule
\end{tabularx}
\vspace{-5mm}
\end{table}

%% file: tables/reward_ablation.tex
\begin{wraptable}{R}{0.48\textwidth}
\centering
\small
\caption{Reward ablation at $k=10$ with a frozen Qwen3.5-4B executor. SR rewards individual task success, MSR rewards success-rate gains over retrieved knowledge, and MI provides token-level information feedback. Formatting rewards are shared across configurations. Values are three-run means under different seeds. \textbf{Bold} marks column best.}
\vspace{+0.1cm}
\label{tab:reward-components}
\begin{tabular}{@{}lccc@{}}
\toprule
Rewards & ALFWorld & \multicolumn{2}{c}{WebShop} \\
\cmidrule(lr){2-2}\cmidrule(l){3-4}
& \multicolumn{1}{c}{SR} & \multicolumn{1}{c}{Score} & \multicolumn{1}{c@{}}{SR} \\
\midrule
SR & 37.1 & 49.0 & 23.3 \\
SR + MI & 46.0 & 65.8 & 40.7 \\
SR + MI + MSR & \textbf{54.0} & \textbf{66.4} & \textbf{42.0} \\
\bottomrule
\end{tabular}
\end{wraptable}

%% file: tables/executor_transfer.tex
\begin{table}[htbp]
\centering
\small
\setlength{\tabcolsep}{2.5pt}
\caption{Transfer of knowledge banks across frozen executors. None denotes execution without knowledge ($k=0$), while GRPO and Knowledge Weaver (Ours) retrieve $k=8$ entries per task. Knowledge-augmented results are averaged over three banks per method. \textbf{Bold} marks the best result for each executor and metric.}
%\lz{**less important results, maybe can adjust the space used? also, all use K=10 before, why use 8 here? seems k = 0 results are expected? Abnormal results for GRPO should be discussed}}
\label{tab:executor-transfer}
\vspace{2mm}
\begin{tabular*}{0.9\linewidth}{@{\extracolsep{\fill}}l*{9}{r}@{}}
\toprule
\textbf{Executor}
& \multicolumn{3}{c}{\textbf{ALFWorld SR (\%)}}
& \multicolumn{3}{c}{\textbf{WebShop Score}}
& \multicolumn{3}{c}{\textbf{WebShop SR (\%)}} \\
\cmidrule(lr){2-4}\cmidrule(lr){5-7}\cmidrule(l){8-10}
& None & GRPO & Ours & None & GRPO & Ours & None & GRPO & Ours \\
\midrule
Qwen3.5-4B & 24.6 & 38.1 & \textbf{49.3} & 53.5 & 49.0 & \textbf{67.5} & 27.3 & 20.7 & \textbf{41.0} \\
\midrule
Qwen3.5-27B & 48.5 & \textbf{61.9} & 61.2 & 54.9 & 55.6 & \textbf{66.9} & 25.0 & 23.0 & \textbf{41.7} \\
Llama-3-8B-Instruct & 6.7 & 7.2 & \textbf{12.7} & 2.7 & 10.3 & \textbf{10.5} & 0.0 & \textbf{4.7} & 4.0 \\
Llama-3.1-8B-Instruct & 9.7 & 14.9 & \textbf{21.1} & 16.0 & 6.7 & \textbf{19.7} & 6.0 & 2.3 & \textbf{11.7} \\
Phi-4-mini-instruct & 7.5 & 9.5 & \textbf{13.9} & 2.3 & \textbf{8.9} & 5.4 & 1.0 & \textbf{3.3} & \textbf{3.3} \\
Phi-4 & 19.4 & 32.6 & \textbf{38.1} & 57.0 & 51.3 & \textbf{65.2} & 13.0 & 16.7 & \textbf{33.3} \\
DeepSeek-V4-Flash & 56.0 & 64.2 & \textbf{64.9} & 57.0 & 56.2 & \textbf{69.2} & 30.0 & 26.0 & \textbf{46.3} \\
MiniMax-M2.5& 35.1 & 53.7 & \textbf{56.5} & 62.8 & 64.8 & \textbf{73.6} & 30.0 & 33.0 & \textbf{48.0} \\
MiniMax-M2.1& 70.9 & 73.9 & \textbf{74.4} & 73.2 & 68.7 & \textbf{75.8} & 43.0 & 37.0 & \textbf{49.7} \\
\bottomrule
\end{tabular*}
\vspace{-3mm}
\end{table}

%% file: sections/conclusion.tex
\vspace{-3mm}
\section{Conclusion}
We introduced Knowledge Weaver, an RL framework that trains knowledge curators to preserve information through token-level MI feedback and rewards for marginal gains over retrieved knowledge, while rewarding standalone success. On ALFWorld and WebShop, the resulting banks outperform the evaluated baselines with a frozen executor, and the gains hold across retrieval sizes. Ablation indicates that MI adds value beyond task success alone. The banks also transfer to other executors, though the gains vary by model. These results support information preservation as a useful principle for building reusable knowledge from agent experience.

We plan several directions that follow from this work in the future. First, because our experiments keep the executor frozen, a natural extension is to jointly train the curator and executor, allowing knowledge construction and utilization to improve in tandem. Second, the uneven transfer gains suggest that knowledge effective for one executor may not transfer directly to another, motivating the development of knowledge banks that generalize across models. Finally, extending Knowledge Weaver to longer-horizon tasks, where the bank is continually updated as experience accumulates, would provide a stronger test of how well knowledge preservation scales over time. More broadly, we view learning to curate reusable knowledge as a step toward agents that self-improve continually through their own experience.

%% file: sections/appendix/proofs.tex
\section{Proofs}
\label{app:proofs}

We use the Gaussian utility profiles from Equation~\ref{eq:utility-profile}, with $K\ge2$ fixed centers $c_1<\cdots<c_K$ and common height $h>0$ and scale $\sigma>0$. Specializing Equation~\ref{eq:knowledge-quality} gives the envelope coverage $\mathcal{E}(h,\sigma)=\int_{\mathbb{R}}\max_i f_i(u;h,\sigma)\,du$. Write $\Delta_i=c_{i+1}-c_i>0$ for the adjacent gaps, and let $\phi$ and $\Phi$ denote the standard normal density and cumulative distribution function.

\subsection{Proof of Proposition~\ref{prop:height}}
\label{app:proof-prop1}

\begin{proof}
For fixed $\sigma$, changing the common height rescales every profile by the same positive factor. This factor can be taken outside the pointwise maximum and the integral, giving
\begin{equation}
\label{eq:homogeneity}
\mathcal{E}(h,\sigma) = h\int_{\mathbb{R}}\max_{1\le i\le K} \exp\!\left(-\frac{(u-c_i)^2}{2\sigma^2}\right)\,du = h\,\mathcal{E}(1,\sigma).
\end{equation}
To establish strict monotonicity, it remains to check that the coefficient $\mathcal{E}(1,\sigma)$ is positive and finite. The envelope is bounded below by any individual profile and above by their sum. Since each unit-height profile integrates to $\sqrt{2\pi}\,\sigma$, these bounds imply
\[
0<\sqrt{2\pi}\,\sigma \le\mathcal{E}(1,\sigma) \le K\sqrt{2\pi}\,\sigma<\infty.
\]
Thus coverage is linear and strictly increasing in the common height $h$.
\end{proof}

\subsection{Proof of Proposition~\ref{prop:width}}
\label{app:proof-prop2}

\begin{proof}
Fix the per-entry mass $A>0$. The Gaussian integral gives $A=\sqrt{2\pi}\,\sigma h$, so the height must vary with $\sigma$ according to
\begin{equation}
\label{eq:hA}
h_A(\sigma)=\frac{A}{\sqrt{2\pi}\,\sigma}, \qquad f_i(u;h_A(\sigma),\sigma) =\frac{A}{\sigma}\phi\!\left(\frac{u-c_i}{\sigma}\right).
\end{equation}
Each profile therefore has mass $A$ for every $\sigma>0$. We write $\mathcal{E}_A(\sigma)=\mathcal{E}(h_A(\sigma),\sigma)$ for the coverage under this constraint.

Because all profiles share the same height and scale, the nearest center provides the largest utility at each location. Define the adjacent midpoints $m_i=(c_i+c_{i+1})/2$ for $i=1,\ldots,K-1$, with $m_0=-\infty$ and $m_K=+\infty$. On $[m_{i-1},m_i]$, the center $c_i$ is nearest to $u$, so $f_i$ attains the upper envelope. We can therefore integrate the envelope one interval at a time:
\begin{align}
\label{eq:decomposition}
\mathcal{E}_A(\sigma) &=\sum_{i=1}^{K}\int_{m_{i-1}}^{m_i} f_i(u;h_A(\sigma),\sigma)\,du \notag\\
&=A\sum_{i=1}^{K}\left[ \Phi\!\left(\frac{m_i-c_i}{\sigma}\right) -\Phi\!\left(\frac{m_{i-1}-c_i}{\sigma}\right) \right].
\end{align}
For each interior interval, the upper and lower bounds relative to its center are $\Delta_i/2$ and $-\Delta_{i-1}/2$. Using $\Phi(-z)=1-\Phi(z)$, its contribution is $A[\Phi(\Delta_i/(2\sigma))+\Phi(\Delta_{i-1}/(2\sigma))-1]$. The two end intervals contribute $A\Phi(\Delta_1/(2\sigma))$ and $A\Phi(\Delta_{K-1}/(2\sigma))$, respectively. Thus each adjacent gap contributes twice, while each of the $K-2$ interior intervals contributes one subtraction of $A$. Summing gives
\begin{equation}
\label{eq:closed-form}
\mathcal{E}_A(\sigma) =A\left[2\sum_{i=1}^{K-1} \Phi\!\left(\frac{\Delta_i}{2\sigma}\right)-(K-2)\right].
\end{equation}
This expression also holds for $K=2$, when there are no interior intervals. Differentiating with $A$ and the centers fixed, and using $\Phi'=\phi$, yields
\begin{equation}
\label{eq:derivative}
\frac{d\mathcal{E}_A(\sigma)}{d\sigma} =-\frac{A}{\sigma^2}\sum_{i=1}^{K-1}\Delta_i\, \phi\!\left(\frac{\Delta_i}{2\sigma}\right)<0.
\end{equation}
The sum contains at least one term because $K\ge2$, and every term is positive because $\Delta_i>0$ and $\phi$ is strictly positive. Hence coverage strictly decreases with $\sigma$ under the fixed per-entry mass constraint, proving the proposition.
\end{proof}

\paragraph{Redundancy and marginal coverage.}
At each location, utility counted by the sum of profiles but omitted by their maximum represents redundant overlap. Its integral is $\mathcal{R}_A(\sigma)=\int_{\mathbb{R}}[\sum_i f_i(u;h_A(\sigma),\sigma)-\max_i f_i(u;h_A(\sigma),\sigma)]\,du$. Subtracting Equation~\ref{eq:closed-form} from the total mass $KA$ gives
\begin{equation}
\label{eq:redundancy}
\mathcal{R}_A(\sigma) =KA-\mathcal{E}_A(\sigma) =2A\sum_{i=1}^{K-1}\Phi\!\left(-\frac{\Delta_i}{2\sigma}\right).
\end{equation}
Since $\mathcal{R}_A'=-\mathcal{E}_A'>0$, increasing the common scale increases redundant overlap. As $\sigma\to0^+$, each $\Phi$ term in Equation~\ref{eq:closed-form} tends to $1$, giving coverage $KA$ in the limit. As $\sigma\to\infty$, each term tends to $1/2$, giving limiting coverage $A$. Consequently, the coverage efficiency $\mathcal{E}_A/(KA)$ decreases from a limit of $1$ to a limit of $1/K$.

More generally, for any nonnegative, integrable candidate profile $f_z$, its marginal coverage is
\begin{equation}
\begin{aligned}
\Delta_{\mathrm{cov}}(z\mid\mathcal{C})
&=\mathcal{E}(\mathcal{C}\cup\{z\})-\mathcal{E}(\mathcal{C})\\
&=\int_{\mathbb{R}}\left[f_z(u)-\max_{1\le i\le K}f_i(u)\right]_+\,du,
\end{aligned}
\label{eq:marginal-coverage}
\end{equation}
where $[a]_+=\max(a,0)$. This follows pointwise from $\max\{a,b\}-b=[a-b]_+$. Apply the identity with $a=f_z(u)$ and $b=\max_i f_i(u)$, then integrate. It holds for any nonnegative, integrable profiles and requires no Gaussian assumption.

For the Gaussian specialization, consider adding an entry with the same $A$ and $\sigma$ at $c_K+\Delta$, where $\Delta>0$. The coverage formula gains one gap term and one additional subtraction of $A$, so the marginal coverage is $A[2\Phi(\Delta/(2\sigma))-1]$. It is strictly increasing in $\Delta$ and lies in $(0,A)$, approaching $0$ as $\Delta\to0^+$ and $A$ as $\Delta\to\infty$. The same argument applies to an entry added beyond the leftmost center.

%% file: sections/appendix/case_study.tex
\clearpage
\section{Case Studies of Knowledge Bank Content}
\label{app:knowledge-cases}

We inspect knowledge banks generated from step-40 curator checkpoints to examine what repeated and distinct entries contain. All examples below are complete entries from the GRPO and Knowledge Weaver used in Table~\ref{tab:main-results}. Each source bank contains 50 entries. We select cases from the knowledge banks of GRPO and Knowledge Weaver. 

\begingroup
\definecolor{casegrpo}{HTML}{8B5353}
\definecolor{caseours}{HTML}{167565}
\newtcolorbox{knowledgecase}[2]{%
  enhanced,breakable,
  colback=#1!3!white,colframe=#1!70!white,
  colbacktitle=#1!10!white,coltitle=#1!60!black,
  fonttitle=\normalsize\bfseries,fontupper=\normalsize,
  title={#2},boxrule=0.5pt,arc=1.5mm,
  left=2.5mm,right=2.5mm,top=1.5mm,bottom=1.5mm,
  toptitle=1.5mm,bottomtitle=1.5mm,
  before skip=7pt,after skip=7pt,
  before upper={\setlength{\parindent}{0pt}}}
\newcommand{\knowledgeentry}[3]{%
  \textbf{#2}\hfill{\normalsize\texttt{#1}}\par\nobreak
  #3\par\smallskip}

\subsection{ALFWorld}
\label{app:case-alfworld}

\begin{knowledgecase}{casegrpo}{GRPO \enspace\textbar\enspace Step 40}
\knowledgeentry{}{Possession before placement}{%
locate your target object, verify it is on the surface with take, then immediately navigate to the destination and execute move to place it. Avoid navigating without picking up the object or performing unnecessary inventory checks that delay the action.}

\knowledgeentry{}{Possession before placement}{%
always take the object from the source before moving or attempting to place it there, even if the action seems redundant, to ensure possession is maintained and progress is made.}

\knowledgeentry{}{Possession before placement}{%
Check your inventory before moving to the destination to confirm you are carrying the correct object, then proceed directly to the destination and place the object there immediately without returning to the source.}

\end{knowledgecase}

\begin{knowledgecase}{caseours}{Knowledge Weaver \enspace\textbar\enspace Step 40}
\knowledgeentry{}{Cooling with the fridge}{%
When cooling an object for placement, immediately pick it up from the countertop after navigating there, then head to the fridge and cool the object there, rather than returning to the countertop or searching for the object multiple times.}

\knowledgeentry{}{Heating with the microwave}{%
If you need to heat an item before placing it: verify the item is accessible by picking it up first (e.g., from the countertop), then heat it in the microwave using valid actions like move and heat; this ensures the object is in your possession and eliminates the error of heating an object that is still sitting in one location while you navigate to gain access to another, which consumes steps without progressing toward success.}

\knowledgeentry{}{Inspection under a lamp}{%
Before looking at an object in light, verify you are holding the object; if not, locate it by navigating to its current position and pick it up, then turn on the nearby lamp by using it only after the object is in your inventory.}

\end{knowledgecase}

The three GRPO entries reduce to the same possession-before-placement routine despite their different wording. The selected Knowledge Weaver entries retain distinct operations: cooling with a fridge, heating with a microwave, and activating a lamp while holding the object. These differences concern the prescribed action and appliance, rather than only an object-name substitution. They demonstrate task-specific content in individual entries, although bank-wide redundancy is not directly assessed by these examples.

\clearpage
\subsection{WebShop}
\label{app:case-webshop}

\begin{knowledgecase}{casegrpo}{GRPO \enspace\textbar\enspace Step 40}
\knowledgeentry{}{Verify color before purchase}{%
Before clicking Buy Now, explicitly verify the visible "color" filter has been applied by selecting a specific color option, rather than relying on assumptions that a search query string initializes this setting.}

\knowledgeentry{}{Verify color before purchase}{%
Before clicking the Buy Now button, explicitly verify that the selected color option has actually been applied by checking the instruction text for a confirmation message (e.g., "You have clicked dark gray") rather than relying on the absence of an error message. This prevents the "invisible update" failure where missing a confirmation causes the price filter to reset, allowing unselected colors to bypass the intended price constraint and wasting search steps.}

\knowledgeentry{}{Verify color before purchase}{%
Before clicking Buy Now, verify the selected color filter option is actually selected on the product interface-based on the URL or page state, not solely on the search query or user intent.}

\end{knowledgecase}

\begin{knowledgecase}{caseours}{Knowledge Weaver \enspace\textbar\enspace Step 40}
\knowledgeentry{}{Select a product attribute}{%
When a product page lists multiple color or size options alongside the current selection, always click the specific correct option to apply the attribute before clicking Buy Now.}

\knowledgeentry{}{Constrain the search query}{%
When combined product attributes like color or material type are critical to the search intent, explicitly specify these color and material constraints in the search query rather than relying solely on the displayed product page for confirmation, as clicking a product page to select a specific color attribute often triggers an error stating "INVALID(click target is not currently visible)."}

\knowledgeentry{}{Continue to the next page}{%
When a product page displays no purchasable option of the specified color or size, explicitly advance to the next page to find the correct attribute before clicking Buy Now, rather than leaving the system with a non-compliant selection field.}

\end{knowledgecase}

The GRPO examples provide general guidance on checking color before purchase, varying the suggested confirmation signal while repeating the same decision rule. Across this bank, 48 of 50 entries begin with ``Before''. The selected Knowledge Weaver entries cover three different decisions: applying an available attribute, expressing constraints in the search query, and continuing to another page when a suitable option is absent.
\endgroup

%% file: sections/appendix/prompt.tex
% Appendix section: replace the existing section with label app:agent-prompts.
% Required in the document preamble:
% \usepackage{xcolor}
% \usepackage{tcolorbox}
% \tcbuselibrary{skins,breakable}
% Prompt source revision: 2c3eb6a03195c5db39086607fbd14e85cf873658.
% Source locations and extracted strings: data/appendix_prompt_sources.json.
% All prompt boxes are included below; no external prompt file is required.
\section{Curator and Executor Prompts}
\label{app:agent-prompts}

We report the system and user message templates used for knowledge curation and execution on ALFWorld and WebShop. The curator receives four trajectories from the same task type and returns a single knowledge entry of at most 100 words, enclosed in \texttt{\textless{}skill\textgreater{}} and \texttt{\textless{}/skill\textgreater{}} tags. We reproduce the evidence-conditioned prompts, including their output requirements, and retain the implementation's term ``skill.'' The prompt wording is preserved; whitespace is adjusted for typesetting.

Fields such as \texttt{\{trajectories\}}, \texttt{\{observation\}}, and \texttt{\{guidance\}} represent content inserted at runtime. The trajectory field contains the rendered source histories, and the guidance field contains the knowledge supplied to the executor. For fixed-bank evaluation, multiple retrieved entries are numbered under ``Retrieved guidance skills:'' and remain fixed throughout the episode. The memory block is omitted when no prior interactions are available, and the WebShop invalid-response line is included only when retrying an invalid response. In the ALFWorld action vocabulary, \texttt{\{obj\}}, \texttt{\{recep\}}, and \texttt{\{appliance\}} are literal action-template slots.

The executor fields are shown after preprocessing and before model-specific chat formatting. The recorded evaluations retain up to five recent interactions. ALFWorld supplies the final 1,800 observation characters and at most 140 admissible commands; WebShop limits the current observation to 5,000 characters and lists at most 60 visible click targets. Command and click-target lists are sorted before truncation.

\begingroup
\raggedbottom
\definecolor{promptteal}{HTML}{167565}
\definecolor{promptblue}{HTML}{315E91}
\definecolor{promptorange}{HTML}{A55D22}
\definecolor{promptpurple}{HTML}{755394}
\newtcolorbox{promptbox}[2]{enhanced,breakable,ignore nobreak,colback=#1!3!white,colframe=#1!70!white,colbacktitle=#1!10!white,coltitle=#1!65!black,fonttitle=\normalsize\bfseries,fontupper=\normalsize,title={#2},title after break={#2 (continued)},boxrule=0.5pt,arc=1.5mm,left=2.5mm,right=2.5mm,top=1.5mm,bottom=1.5mm,toptitle=1.5mm,bottomtitle=1.5mm,before skip=7pt,after skip=7pt,before upper={\setlength{\parindent}{0pt}\raggedright}}
\subsection{ALFWorld curator}
\begin{promptbox}{promptteal}{ALFWorld curator | System}
You extract one useful guidance skill from agent trajectories. The acting agent already receives general operating guidance to consult its legal actions, navigate where needed, reveal accessible contents, and carry out the requested interaction. Treat that guidance as given rather than rewriting it as the learned skill.\par\medskip Study the complete trajectories as behavioral evidence. Keep investigating beyond convenient surface explanations until the histories support an additional consequential insight that can improve future decisions. The conclusion must depend on what actually happened in the trajectories. Do not merely summarize the task or action-space description, imitate a fixed writing pattern, or invent unsupported details. Reason privately and output only the final natural-language skill required by the output contract. Return exactly one skill wrapped by \textless{}skill\textgreater{} and \textless{}/skill\textgreater{}, with no text outside the tags.\par\medskip
\end{promptbox}

\begin{promptbox}{promptteal}{ALFWorld curator | User}
\# Task category\par \{category\}   (\{category\_note\})\par\medskip \# Legal action vocabulary\par The agent may use these action templates: go to \{recep\}, open/close \{recep\}, take \{obj\} from \{recep\}, put \{obj\} in/on \{recep\}, use \{obj\}, heat/cool/clean \{obj\} with \{appliance\}, examine \{obj\}, look. This list only constrains which actions are legal. It is not trajectory evidence and must not be restated as the learned skill. The agent only sees the current room view. Receptacles must be navigated to, and some must be opened before their contents become visible.\par\medskip \# Complete trajectories\par \{trajectories\}\par\medskip \# Skill content requirements\par - Read across the complete action histories and discover a consequential issue or insight that is easy to miss from the task description alone, then give useful guidance for addressing it.\par - Ground the guidance in differences, consequences, or lack of progress visible in the supplied trajectories. If the same advice could be written without reading them, investigate further.\par - Do not mechanically recap the action sequence.\par - Use only legal actions from the action vocabulary. Do not invent actions or unsupported facts.\par - Write one compact natural-language paragraph of at most 100 words, without following a fixed rhetorical template.\par\medskip \# Strict output requirements\par - Return exactly: \textless{}skill\textgreater{}\textless{}one natural-language guidance paragraph\textgreater{}\textless{}/skill\textgreater{}\par - The opening tag must be the first non-whitespace text and the closing tag the last.\par - Use exactly one opening tag and one closing tag; do not add analysis, markdown, labels, nested tags, or text outside the wrapper.\par\medskip
\end{promptbox}

\subsection{ALFWorld executor}
\begin{promptbox}{promptblue}{ALFWorld executor | System}
You are an ALFWorld TextWorld agent. Choose the next action using this exact format:\par Thought: \textless{}concise reasoning, max 30 words\textgreater{}\par Action: \textless{}copy exactly one command from the admissible command list\textgreater{}\par Write the Action line immediately after the Thought line. Do not add any other text.\par Guidance skill:\par \{guidance\}\par\medskip
\end{promptbox}

\begin{promptbox}{promptblue}{ALFWorld executor | User}
Task: \{task\_description\}\par Observation: \{observation\}\par Recent online episodic memory:\par \{online\_memory\}\par Admissible commands:\par \{admissible\_commands\}\par\medskip Respond with Thought and Action:\par\medskip
\end{promptbox}

\subsection{WebShop curator}
\begin{promptbox}{promptorange}{WebShop curator | System}
You extract one useful guidance skill from shopping-agent trajectories. Read the complete histories as behavioral evidence and begin with the concrete details that explain why an action helped, failed, or changed the outcome. Infer one non-obvious, consequential insight that would materially improve a future agent's decision in a similar situation. Let the evidence determine what the skill is about instead of starting from a predetermined workflow or restating advice that could be written without these histories.\par\medskip Make the insight specific enough to guide a decision. Ground every claim in an observed state, action consequence, or outcome difference. Do not produce generic boilerplate, a reusable checklist, a trajectory recap, or unsupported product facts. Reason privately and output only the final skill required by the output contract.\par\medskip Return exactly one \textless{}skill\textgreater{}...\textless{}/skill\textgreater{} wrapper and no text outside it.\par\medskip
\end{promptbox}

\begin{promptbox}{promptorange}{WebShop curator | User}
\# Task category\par WebShop product search, option selection, and purchase\par\medskip \# Legal action vocabulary\par The agent can call search[concise query] when a search box is present and click[exact visible target] for products, navigation controls, product options, and Buy Now. This vocabulary only constrains legal behavior; it is not trajectory evidence and must not be restated as the skill.\par\medskip \# Complete trajectory evidence\par \{trajectories\}\par\medskip \# Skill content requirements\par - Extract a consequential insight whose usefulness depends on reading these histories closely, including evidence from failed progress, misleading page state, or a successful change in behavior.\par - Prefer the smallest behaviorally meaningful correction or decision principle supported by the evidence. If the same advice could be written without the histories, investigate further.\par - Let concrete trajectory evidence determine both the content and wording. Do not fall back on generic boilerplate, a reusable checklist, or a trajectory recap.\par - Use only legal WebShop actions and facts supported by the supplied histories.\par - Write one compact natural-language paragraph of at most 100 words.\par\medskip \# Strict output requirements\par - Return exactly: \textless{}skill\textgreater{}\textless{}one natural-language guidance paragraph\textgreater{}\textless{}/skill\textgreater{}\par - The opening tag must be the first non-whitespace text and the closing tag the last.\par - Use exactly one opening tag and one closing tag. Do not add analysis, markdown, labels, nested tags, or text outside the wrapper.\par\medskip
\end{promptbox}

\subsection{WebShop executor}
\begin{promptbox}{promptpurple}{WebShop executor | System}
You are a WebShop text agent. Select the next action to satisfy the shopping instruction and maximize the final WebShop reward.\par Respond using exactly two lines:\par Thought: \textless{}concise reasoning, at most 35 words\textgreater{}\par Action: \textless{}one valid action\textgreater{}\par Use search[concise keywords] only when search is available. Use click[exact target] by copying one currently visible clickable target. Do not add any other text.\par Guidance skill:\par \{guidance\}\par\medskip
\end{promptbox}

\begin{promptbox}{promptpurple}{WebShop executor | User}
Shopping instruction: \{shopping\_instruction\}\par Current page observation:\par \{observation\}\par Recent online memory:\par \{online\_memory\}\par Search action: \{available\_or\_unavailable\}\par Visible click targets:\par \{visible\_click\_targets\}\par Previous response was invalid: \{error\_reason\}\par Return Thought and Action now:\par\medskip
\end{promptbox}
\endgroup

%% file: sections/appendix/baseline_reproduction.tex
\section{Baseline Reproduction}
\label{app:baseline-reproduction}

We reproduced Skill1, SkillRise, and Mem0 with Qwen3.5-4B under a shared bank-evaluation protocol. The trained Skill1 and SkillRise checkpoints served as curators for separate bank generation. Each bank contained 50 entries of at most 100 words, constructed from matched bundles of four trajectories. Evaluation used the frozen Qwen3.5-4B executor and \texttt{all-mpnet-base-v2} retrieval, with the top 10 entries selected once and held fixed throughout each episode. Skill1 and SkillRise results are reported as means and standard deviations across three runs.

\subsection{Skill1}

We retained Skill1's query generation, skill reranking, library updates, and distillation objective~\citep{shi2026skill1}. We trained query-generation and reranking responses while excluding task-execution responses from the policy loss. Complete trajectories still supplied outcome rewards. We used the step-40 checkpoints to generate the evaluation banks.

\subsection{SkillRise}

We retained SkillRise's cross-task training and decoupled credit assignment~\citep{yao2026skillrise}. Each trial comprised three tasks from the same family, with the skill document updated between tasks and reset between trials. Solving responses received feedback from their own task outcomes, whereas curation responses received feedback from subsequent tasks.

\subsection{Mem0}

We evaluated both semantic and procedural banks constructed with Mem0 v2.1.0~\citep{chhikara2025mem0} and Qwen3.5-4B, retaining the package's native memory-writing prompts and exporting memories from the same trajectory bundles into the shared bank format. For each evaluation seed, we averaged the results of the two bank types, then reported the mean and standard deviation across three seeds.

\subsection{Human Knowledge Construction}
\label{app:human-knowledge}

For the Human-written baseline, we sampled trajectories using the same procedure and distributed them to volunteers, who wrote knowledge entries based on these trajectories. We then assembled these entries into knowledge banks following the shared construction protocol and evaluated the resulting banks under the same evaluation protocol.

%% file: sections/appendix/experiment_details.tex
\section{Experimental Details}
\label{app:experiment-details}

\subsection{Hyperparameter Details}

Table~\ref{tab:training_hyperparameters} summarizes the training configuration for the Qwen3.5-4B curator. We update all curator parameters while keeping the executor and reference model frozen. After training, the curator generates separate knowledge banks for downstream evaluation.

\input{tables/appendix_hyper_parameter}

\subsection{Data Sources and Task Splits}
\label{app:data-splits}

Table~\ref{tab:data-splits} summarizes the task sources used
by Knowledge Weaver. The full training split is eligible
for reward sampling; each group samples eight tasks rather
than evaluating the entire split at every update.

\begin{table}[t]
\centering
\small
\renewcommand{\arraystretch}{1.15}
\begin{tabularx}{\linewidth}{
    @{}>{\raggedright\arraybackslash}p{0.25\linewidth}
    >{\raggedright\arraybackslash}X
    >{\raggedright\arraybackslash}X@{}
}
\toprule
\textbf{Data role} & \textbf{ALFWorld} & \textbf{WebShop} \\
\midrule
Training source trajectories
& Collected from \texttt{train} across all six task types.
& Collected from the full official human training split. \\

SR/MSR reward tasks
& Sampled from all \texttt{train} tasks matching the
  source context's task type.
& Sampled from all 10,587 official human training tasks. \\

Evaluation-bank source trajectories
& 120 \texttt{train} trajectories, with 20 per task type.
& 300 synthetic training-task trajectories. \\

Final evaluation tasks
& All 134 \texttt{valid\_unseen} tasks.
& A fixed subset of 100 tasks from the 500 official
  human test tasks. \\
\bottomrule
\end{tabularx}
\caption{Task sources for training, bank construction,
and final evaluation.}
\label{tab:data-splits}
\end{table}

Training trajectories, reward tasks, and bank-generation
sources may share training tasks. For ALFWorld, these
sources belong to \texttt{train}, whereas final evaluation
uses \texttt{valid\_unseen}. For WebShop, official human
training and test tasks have disjoint task IDs.
Synthetic bank-generation tasks use a separate ID
namespace.

\subsection{Training Details}
\label{app:training-details}
\label{app:optimizer-details}

We initialized the curator, executor, and reference policy from Qwen3.5-4B. Training updated all curator parameters, while the executor and reference policy remained frozen. We optimized the curator with the clipped GRPO objective using AdamW, a constant learning rate, and KL regularization against the reference policy. Table~\ref{tab:training_hyperparameters} specifies the principal settings for the main Knowledge Weaver configuration on ALFWorld and WebShop.

The training batch size and GRPO group size refer to different sampling levels. Each training batch contained 32 trajectory contexts, each comprising four trajectories from the same task type. For each context, the curator sampled eight candidate knowledge entries from the identical input, giving $32\times8=256$ candidates per batch. Execution rewards were normalized within each group of eight candidates. The value 32 therefore counts input contexts per training batch, whereas the value 8 counts candidate entries per context. The PPO minibatch count in the table specifies how this generated batch was partitioned for optimization.

For execution feedback, we used eight reward-task rollouts per candidate and condition, with matched tasks shared across the standalone, bank-only, and bank-plus-candidate conditions. Standalone success and marginal success received equal weights of $0.5$. A valid \texttt{<skill>} response received a formatting bonus of $0.02$; an invalid response received a penalty of $-0.1$ without execution or MI credit. Token-level MI credit was applied only to knowledge-body tokens, with weight $0.1$ and a cap of $0.15$ after scaling. We used within-group normalization for execution rewards and batch-level normalization for the resulting token advantages.

The training bank was initialized empty and held fixed while each batch was scored. Marginal feedback used up to three retrieved entries, and bank admission required a strictly positive marginal success gain. On ALFWorld, the update quota allowed at most one entry from each of two task types per batch during the first 20 updates, and at most one entry thereafter. On WebShop, at most one entry was admitted per batch. After training, the curator generated a separate evaluation bank of 50 entries. The training retrieval count of three is distinct from the main evaluation setting of $k=10$.

\subsection{Retrieval-Count Scaling}
\label{app:retrieval-count}

We evaluate retrieval counts $k\in\{0,\ldots,15\}$ while keeping each knowledge bank, the executor, and the benchmark task set fixed. For methods also shown in Figure~\ref{fig:knowledge-tokens}, we use the same evaluation runs and replace token count with retrieval count on the horizontal axis.

Each curve uses a fixed set of runs across all retrieval counts using three runs. Each curve retains its own no-knowledge control at $k=0$.

\begin{figure}[h]
    \centering
    \includegraphics[width=\linewidth]{figures/token_shared.png}
    \par
    \begin{minipage}[t]{0.49\linewidth}
        \centering
        \includegraphics[width=\linewidth]{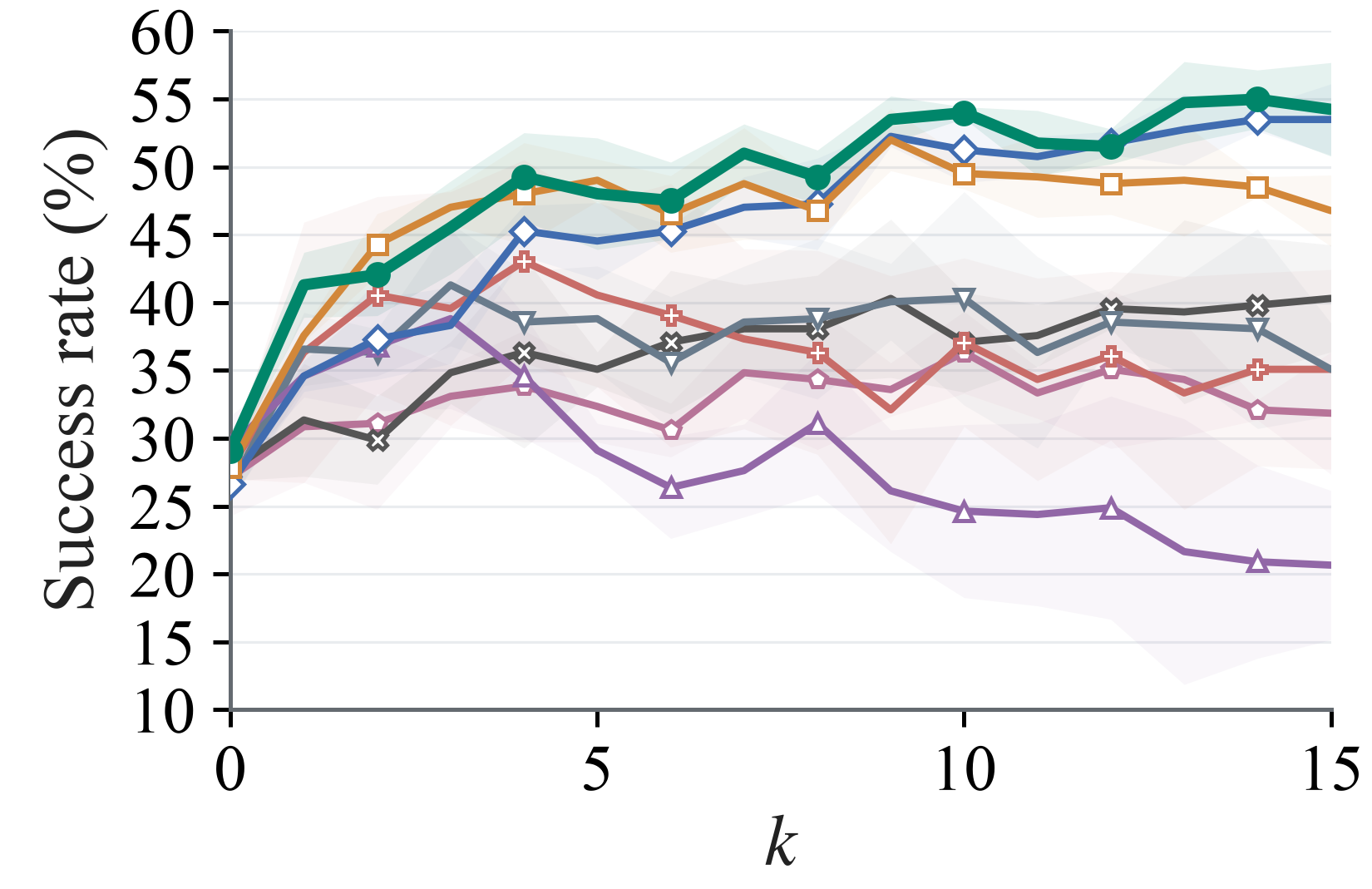}
        \par (a) ALFWorld
    \end{minipage}\hfill
    \begin{minipage}[t]{0.49\linewidth}
        \centering
        \includegraphics[width=\linewidth]{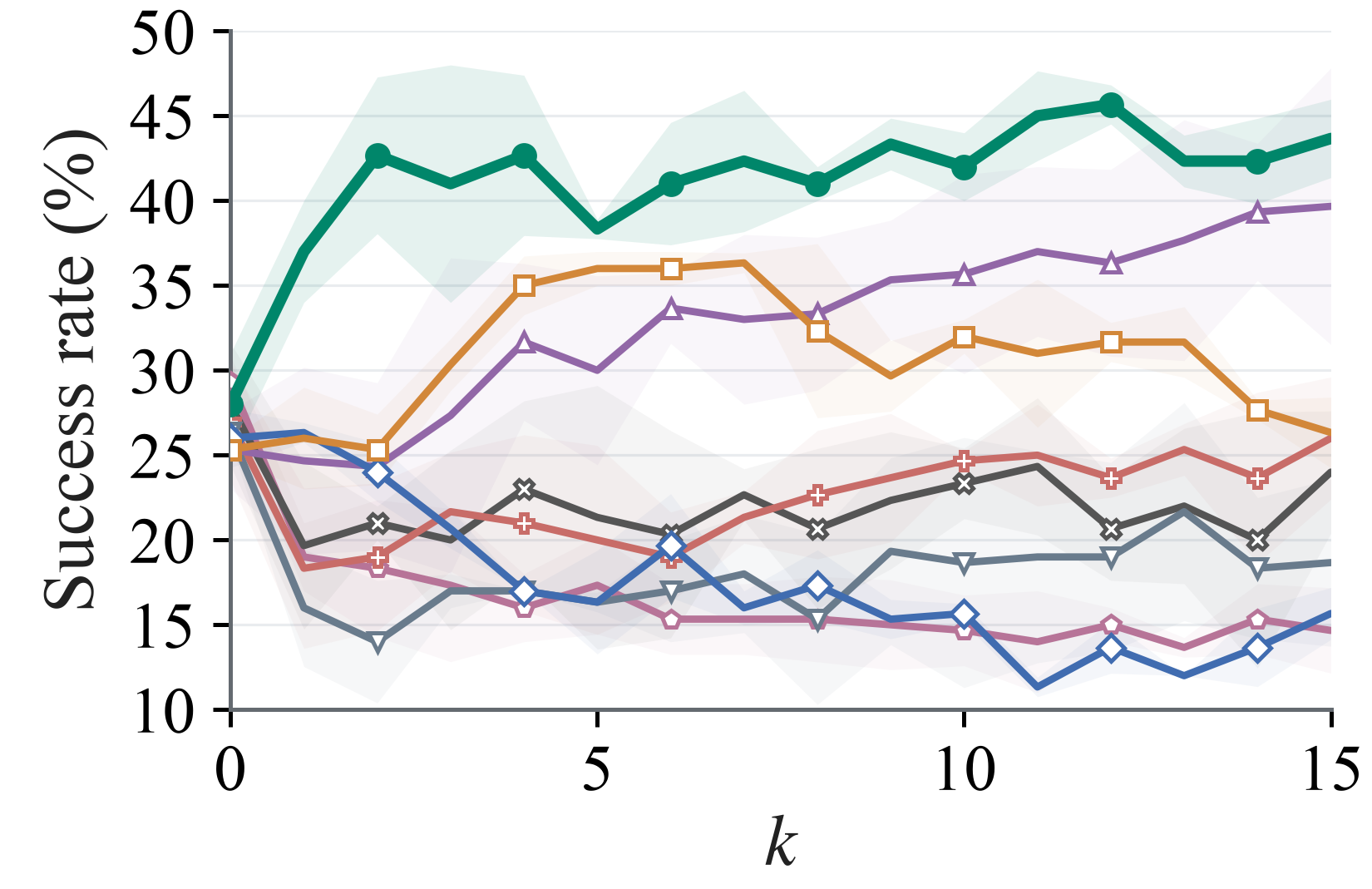}
        \par (b) WebShop
    \end{minipage}
    \caption{Success rate as the number of retrieved knowledge entries $k$ varies from 0 to 15. The horizontal axis reports the number of retrieved knowledge injected into the frozen executor. Lines and shading show the mean and one standard deviation across three runs. Each curve uses its own no-knowledge control at $k=0$.}
    \label{fig:knowledge-count}
\end{figure}

Figure~\ref{fig:knowledge-count} shows that the effect of additional retrieved knowledge varies across banks and environments. Knowledge Weaver achieves SRs of 54.0\% and 42.0\% at $k=10$ on ALFWorld and WebShop, respectively. At $k=15$, the corresponding rates are 54.2\% and 43.7\%.

%% file: tables/appendix_hyper_parameter.tex
\begin{table}[t]
\centering
\small
\setlength{\tabcolsep}{5pt}
\renewcommand{\arraystretch}{1.03}
\begin{tabular}{@{}lcc@{}}
\toprule
\textbf{Hyperparameter} & \textbf{ALFWorld} & \textbf{WebShop} \\
\midrule
\multicolumn{3}{@{}l}{\textit{Optimization}} \\
Optimizer & \multicolumn{2}{c}{AdamW} \\
Learning rate & \multicolumn{2}{c}{$1\times10^{-6}$} \\
Learning-rate schedule / warmup & \multicolumn{2}{c}{Constant / none} \\
AdamW $(\beta_1,\beta_2)$ & \multicolumn{2}{c}{$(0.9,0.999)$} \\
AdamW numerical epsilon & \multicolumn{2}{c}{$10^{-8}$} \\
Weight decay & \multicolumn{2}{c}{$0.01$} \\
Gradient-norm clipping & \multicolumn{2}{c}{$1.0$} \\
Policy-ratio clipping parameter & \multicolumn{2}{c}{$0.2$} \\
KL regularization coefficient & \multicolumn{2}{c}{$0.01$} \\
PPO minibatches per training batch & \multicolumn{2}{c}{$32$} \\
\midrule
\multicolumn{3}{@{}l}{\textit{Context Sampling and Generation}} \\
Trajectory contexts per training batch & \multicolumn{2}{c}{$32$} \\
Contexts per MI comparison set & \multicolumn{2}{c}{$8$} \\
Input trajectories per context & \multicolumn{2}{c}{$4$} \\
Candidate entries per context (GRPO group size) & \multicolumn{2}{c}{$8$} \\
Curator sampling temperature & \multicolumn{2}{c}{$1.0$} \\
Curator top-$p$ & \multicolumn{2}{c}{$1.0$} \\
Maximum generated tokens per candidate & \multicolumn{2}{c}{$256$} \\
Maximum words per knowledge entry & \multicolumn{2}{c}{$100$} \\
Maximum rendered steps per source trajectory & \multicolumn{2}{c}{$24$} \\
Source-observation character limit & $120$ & $400$ \\
\midrule
\multicolumn{3}{@{}l}{\textit{Execution Feedback and Bank Updates}} \\
Executor temperature & \multicolumn{2}{c}{$0$} \\
Maximum actions per reward rollout & $50$ & $15$ \\
Reward-task rollouts per candidate and condition & \multicolumn{2}{c}{$8$} \\
Standalone-success weight, $\lambda_{\mathrm{SR}}$ & \multicolumn{2}{c}{$0.5$} \\
Marginal-success weight, $\lambda_{\mathrm{MSR}}$ & \multicolumn{2}{c}{$0.5$} \\
Token-level MI weight, $\lambda_{\mathrm{MI}}$ & \multicolumn{2}{c}{$0.1$} \\
MI credit cap after scaling, $c_{\mathrm{MI}}$ & \multicolumn{2}{c}{$0.15$} \\
Valid-format bonus / invalid-format penalty & \multicolumn{2}{c}{$+0.02\,/\,-0.1$} \\
Retrieved entries for marginal feedback, $m$ & \multicolumn{2}{c}{$3$} \\
\bottomrule
\end{tabular}
\caption{Key training hyperparameters for the main Qwen3.5-4B Knowledge Weaver configuration. Shared values span both benchmark columns; benchmark-specific values are shown separately. Each training batch contains 32 trajectory contexts and eight candidates per context, for 256 candidate entries before optimization. Only the curator is trained. The table describes training; the separately generated evaluation banks contain 50 entries and use $k=10$ in the main comparison.}
\label{tab:training_hyperparameters}
\end{table}

%% file: sections/appendix/pseudo_code.tex
\section{Training Algorithm}
\label{app:training-algorithm}

Algorithm~\ref{alg:knowledge-weaver} summarizes Knowledge Weaver training. Candidates generated from the same trajectory context form a GRPO sampling group. Token-level MI feedback compares each candidate across source contexts while preserving its generated tokens and prefix. The executor and reference policy remain frozen throughout training.

\begin{algorithm}[t]
\caption{Knowledge Weaver curator training}
\label{alg:knowledge-weaver}
\begin{algorithmic}[1]
\Require Curator $\pi_\theta$, frozen executor, reference policy $\pi_{\mathrm{ref}}$, trajectory pool $\mathcal{T}$
\Require Updates $J$, candidates per context $G\ge2$, retrieval count $m$, admission schedule $\mathcal{Q}$
\Require Reward weights $\lambda_{\mathrm{SR}},\lambda_{\mathrm{MSR}}$, MI weight $\lambda_{\mathrm{MI}}$, MI cap $c_{\mathrm{MI}}$
\Ensure Trained curator $\pi_\theta$
\State Initialize training knowledge bank $\mathcal{B}\gets\varnothing$
\For{$j=1,\ldots,J$}
    \State $\pi_{\mathrm{old}}\gets\pi_\theta$; fix bank snapshot $\mathcal{B}^{\mathrm{snap}}\gets\mathcal{B}$
    \State Sample a batch of trajectory-context sets from $\mathcal{T}$
    \For{each context set $\mathcal{X}=(x_1,\ldots,x_N)$, with $N\ge2$}
        \For{$i=1,\ldots,N$}
            \State Sample $G$ candidates $z\sim\pi_{\mathrm{old}}(\cdot\mid x_i)$
            \State Select training reward tasks $\mathcal{D}_i$, shared by these candidates
            \State Retrieve up to $m$ entries $B_q$ from $\mathcal{B}^{\mathrm{snap}}$ for each $q\in\mathcal{D}_i$
            \For{each candidate $z$ generated from $x_i$}
                \State Initialize $U(\lbrace z\rbrace)\gets0$, $\mathrm{MSR}(z)\gets0$, and $r_t^{\mathrm{MI}}\gets0$
                \If{$z$ has valid formatting}
                    \State Evaluate $s(q,\lbrace z\rbrace)$, $s(q,B_q)$, and $s(q,B_q\cup\lbrace z\rbrace)$ on $\mathcal{D}_i$
                    \State Average standalone success to obtain $U(\lbrace z\rbrace)$
                    \State Compute $\mathrm{MSR}(z)$ using Equation~\ref{eq:marginal-success} on $\mathcal{D}_i$
                    \State Rescore $z_t$ under each $x_\ell\in\mathcal{X}$ with $z_{<t}$ fixed
                    \State Compute masked token credits $r_t^{\mathrm{MI}}$ using Equation~\ref{eq:local-mi-credit}
                \EndIf
                \State $R_{\mathrm{exec}}(z)\gets\lambda_{\mathrm{SR}}U(\lbrace z\rbrace)+\lambda_{\mathrm{MSR}}\mathrm{MSR}(z)+r_{\mathrm{fmt}}(z)$
            \EndFor
            \State Normalize $R_{\mathrm{exec}}(z)$ within the $G$ candidates generated from $x_i$
        \EndFor
    \EndFor
    \State Combine normalized execution rewards and local MI credit as $A_{g,t}$ (Equation~\ref{eq:combined-advantage})
    \State Compute $\widehat A_{g,t}$ by reference regularization and batch normalization (Equation~\ref{eq:final-advantage})
    \State Update $\theta$ using the clipped GRPO objective with $\pi_{\mathrm{old}}$ and $\widehat A_{g,t}$
    \State Select valid candidates absent from $\mathcal{B}^{\mathrm{snap}}$ with $\mathrm{MSR}(z)>0$
    \State Admit the highest-gain eligible candidates subject to $\mathcal{Q}_j$ to obtain $\mathcal{B}$
\EndFor
\State \Return $\pi_\theta$
\end{algorithmic}
\end{algorithm}

All execution conditions for a candidate use the same tasks, with $B_q$ fixed in the marginal comparison. With an empty bank, the bank-only condition reduces to execution without knowledge. Invalid candidates receive only the formatting penalty. MI credit is added after execution advantages are computed, so it is not propagated to preceding tokens through return estimation.

The admission schedule $\mathcal{Q}_j$ specifies the eligible task types and update quota. On ALFWorld, we admit at most one candidate per scheduled task type, selecting two types per update for the first 20 updates and one thereafter. On WebShop, we admit at most one candidate per update. Admission ranks candidates by $\mathrm{MSR}$ alone; if no eligible candidate has positive gain, the bank remains unchanged. The updated bank is used in the next training batch. After training, the curator generates a separate knowledge bank for downstream evaluation.

\subsection{Optimization details}
\label{app:optimizer-details}

\paragraph{Feedback and final advantages.}
The combined SR, MSR, and formatting rewards in Equation~\ref{eq:combined-advantage} are normalized across candidates from the same source context before being placed at the final generated token. We use zero value estimates, unit discount, and unit GAE parameter. Reference-policy regularization contributes a penalty at each generated token. Let $\ell_{g,t}^{\mathrm{beh}}$ and $\ell_{g,t}^{\mathrm{ref}}$ denote the sampled token log-probabilities under the behavior and frozen reference policies. The sampled KL term is
\begin{equation}
\kappa_{g,t}=\operatorname{clip}\!\left(\ell_{g,t}^{\mathrm{beh}}-\ell_{g,t}^{\mathrm{ref}},-10,10\right).
\label{eq:sampled-kl-reward}
\end{equation}
Backward return computation accumulates these penalties from token $t$ to the entry's final token $T_g$. Local MI credit is added after this computation, so it is not accumulated across token positions. Using the modified advantage $A_{g,t}$ from Equation~\ref{eq:combined-advantage}, the final advantages are
\begin{equation}
\widehat A_{g,t}=\operatorname{Norm}_{\mathrm{batch}}\!\left(A_{g,t}-\beta\sum_{s=t}^{T_g}\kappa_{g,s}\right).
\label{eq:final-advantage}
\end{equation}
The sum includes only unmasked generated tokens. Batch normalization subtracts the mean and divides by the sample standard deviation plus $\epsilon_{\mathrm{num}}$ over all such tokens in the rollout batch. Group reward normalization also uses sample standard deviation, with $\epsilon_{\mathrm{num}}=10^{-5}$ in both operations. The configurations use $\beta=0.01$ and clipping radius $\varepsilon=0.2$. Reference regularization enters through these rewards, without an additional differentiated KL term in the loss.

The full configuration uses $\lambda_{\mathrm{SR}}=\lambda_{\mathrm{MSR}}=0.5$, $\lambda_{\mathrm{MI}}=0.1$, and $c_{\mathrm{MI}}=0.15$. Formatting adds $0.02$ for valid entries and gives invalid entries a reward of $-0.1$. The backend also clips scalar rewards before group normalization and local token credits before addition to $[-20,20]$. This bound is inactive for these reward ranges and does not alter the equations above.

\paragraph{Clipped update.}
We use the clipped policy surrogate underlying GRPO \citep{shao2024deepseekmath}, averaged over valid tokens in each optimization minibatch. For token advantages $A=\{A_{g,t}\}$, the loss is
\begin{equation}
\mathcal{L}_{\mathrm{clip}}(\theta;A)
=-\frac{1}{|\mathcal{V}|}\sum_{(g,t)\in\mathcal{V}}
\min\!\left\{
\widetilde\rho_{g,t}(\theta)A_{g,t},\,
\operatorname{clip}\!\left(\widetilde\rho_{g,t}(\theta),1-\varepsilon,1+\varepsilon\right)A_{g,t}
\right\},
\label{eq:grpo-loss}
\end{equation}
where $\mathcal{V}$ contains unmasked token positions in an optimization minibatch and $g$ indexes candidates across source contexts. Training uses $A=\widehat A$ from Equation~\ref{eq:final-advantage}, with all advantage quantities held fixed during differentiation. The ratio $\widetilde\rho_{g,t}$ compares the current token probability with a detached proximal probability,
\begin{equation}
\widetilde\rho_{g,t}(\theta)=
\exp\!\left(\log\pi_\theta(z_t^{(g)}\mid x^{(g)},z_{<t}^{(g)})-\ell_{g,t}^{\mathrm{prox}}\right),
\label{eq:policy-ratio}
\end{equation}
where $x^{(g)}$ is candidate $g$'s source context. Standard GRPO uses the behavior log-probability as $\ell_{g,t}^{\mathrm{prox}}$. Our backend uses a log-linear approximation based on the policy versions recorded during generation. For current training version $v$ and token generation version $v_{g,t}^{\mathrm{beh}}<v$, define
\begin{equation}
\begin{gathered}
\alpha_{g,t}=\operatorname{clip}\!\left(\frac{v-1-v_{g,t}^{\mathrm{beh}}}{v-v_{g,t}^{\mathrm{beh}}},0,1\right),\\
\ell_{g,t}^{\mathrm{prox}}=(1-\alpha_{g,t})\ell_{g,t}^{\mathrm{beh}}+
\alpha_{g,t}\operatorname{stopgrad}\!\left(\log\pi_\theta(z_t^{(g)}\mid x^{(g)},z_{<t}^{(g)})\right).
\end{gathered}
\label{eq:proximal-logprob}
\end{equation}
Otherwise $\alpha_{g,t}=0$, and the behavior log-probability is used directly. Samples from the immediately preceding version also have $\alpha_{g,t}=0$. The denominator is detached even when its numerical value incorporates current-policy log-probabilities. We use neither dual clipping nor rejection sampling. Algorithm~\ref{alg:knowledge-weaver} omits this version bookkeeping for readability; it does not change the knowledge rewards or bank-admission criterion. All updates apply only to the curator, with the executor and reference policy frozen.

%% file: ref.bib
@article{chhikara2025mem0,
  title={Mem0: Building production-ready ai agents with scalable long-term memory},
  author={Chhikara, Prateek and Khant, Dev and Aryan, Saket and Singh, Taranjeet and Yadav, Deshraj},
  journal={arXiv preprint arXiv:2504.19413},
  year={2025}
}

@article{xu2026mem,
  title={A-mem: Agentic memory for llm agents},
  author={Xu, Wujiang and Liang, Zujie and Mei, Kai and Gao, Hang and Tan, Juntao and Zhang, Yongfeng},
  journal={Advances in Neural Information Processing Systems},
  volume={38},
  pages={17577--17604},
  year={2026}
}

@inproceedings{zhao2024expel,
  title={Expel: Llm agents are experiential learners},
  author={Zhao, Andrew and Huang, Daniel and Xu, Quentin and Lin, Matthieu and Liu, Yong-Jin and Huang, Gao},
  booktitle={Proceedings of the AAAI Conference on Artificial Intelligence},
  volume={38},
  number={17},
  pages={19632--19642},
  year={2024}
}

@inproceedings{zhang2026agentic,
  title={Agentic context engineering: Evolving contexts for self-improving language models},
  author={Zhang, Qizheng and Hu, Changran and Upasani, Shubhangi and Ma, Boyuan and Hong, Fenglu and Kamanuru, Vamsidhar and Rainton, Jay and Wu, Chen and Ji, Mengmeng and Li, Hanchen and others},
  booktitle={International Conference on Learning Representations},
  volume={2026},
  pages={86069--86100},
  year={2026}
}

@article{lu2026skill0,
  title={Skill0: In-context agentic reinforcement learning for skill internalization},
  author={Lu, Zhengxi and Yao, Zhiyuan and Wu, Jinyang and Han, Chengcheng and Gu, Qi and Cai, Xunliang and Lu, Weiming and Xiao, Jun and Zhuang, Yueting and Shen, Yongliang},
  journal={arXiv preprint arXiv:2604.02268},
  year={2026}
}

@article{zhu2026skill0,
  title={Skill0. 5: Joint Skill Internalization and Utilization for Out-of-Distribution Generalization in Agentic Reinforcement Learning},
  author={Zhu, Jiapeng and Yu, Jianxiang and Zhao, Yibo and Han, Chengcheng and Gu, Qi and Cai, Xunliang and Li, Xiang and Qian, Weining},
  journal={arXiv preprint arXiv:2605.28424},
  year={2026}
}

@article{shi2026skill1,
  title={Skill1: Unified evolution of skill-augmented agents via reinforcement learning},
  author={Shi, Yaorui and Chen, Yuxin and Lu, Zhengxi and Miao, Yuchun and Liu, Shugui and Gu, Qi and Cai, Xunliang and Wang, Xiang and Zhang, An},
  journal={arXiv preprint arXiv:2605.06130},
  year={2026}
}

@article{wang2023voyager,
  title={Voyager: An open-ended embodied agent with large language models},
  author={Wang, Guanzhi and Xie, Yuqi and Jiang, Yunfan and Mandlekar, Ajay and Xiao, Chaowei and Zhu, Yuke and Fan, Linxi and Anandkumar, Anima},
  journal={arXiv preprint arXiv:2305.16291},
  year={2023}
}

@article{shridhar2020alfworld,
  title={Alfworld: Aligning text and embodied environments for interactive learning},
  author={Shridhar, Mohit and Yuan, Xingdi and C{\^o}t{\'e}, Marc-Alexandre and Bisk, Yonatan and Trischler, Adam and Hausknecht, Matthew},
  journal={arXiv preprint arXiv:2010.03768},
  year={2020}
}

@article{yao2022webshop,
  title={Webshop: Towards scalable real-world web interaction with grounded language agents},
  author={Yao, Shunyu and Chen, Howard and Yang, John and Narasimhan, Karthik},
  journal={Advances in Neural Information Processing Systems},
  volume={35},
  pages={20744--20757},
  year={2022}
}

@inproceedings{ouyang2026reasoningbank,
  title={Reasoningbank: Scaling agent self-evolving with reasoning memory},
  author={Ouyang, Siru and Yan, Jun and Hsu, I and Chen, Yanfei and Jiang, Ke and Wang, Zifeng and Han, Rujun and Le, Long and Daruki, Samira and Tang, Xiangru and others},
  booktitle={International Conference on Learning Representations},
  volume={2026},
  pages={94327--94354},
  year={2026}
}

@article{fu2024autoguide,
  title={Autoguide: Automated generation and selection of context-aware guidelines for large language model agents},
  author={Fu, Yao and Kim, Dong-Ki and Kim, Jaekyeom and Sohn, Sungryull and Logeswaran, Lajanugen and Bae, Kyunghoon and Lee, Honglak},
  journal={Advances in Neural Information Processing Systems},
  volume={37},
  pages={119919--119948},
  year={2024}
}

@article{li2026skillsbench,
  title={SkillsBench: Benchmarking how well agent skills work across diverse tasks},
  author={Li, Xiangyi and Liu, Yimin and Chen, Wenbo and You, Bingran and Di, Zonglin and He, Yifeng and Zheng, Shenghan and Choe, Kyoung Whan and Sun, Jiankai and Wang, Shuyi and others},
  journal={arXiv preprint arXiv:2602.12670},
  year={2026}
}

@article{zheng2025skillweaver,
  title={Skillweaver: Web agents can self-improve by discovering and honing skills},
  author={Zheng, Boyuan and Fatemi, Michael Y and Jin, Xiaolong and Wang, Zora Zhiruo and Gandhi, Apurva and Song, Yueqi and Gu, Yu and Srinivasa, Jayanth and Liu, Gaowen and Neubig, Graham and others},
  journal={arXiv preprint arXiv:2504.07079},
  year={2025}
}

@article{mi2026skill,
  title={Skill-pro: Learning reusable skills from experience via non-parametric ppo for llm agents},
  author={Mi, Qirui and Ma, Zhijian and Yang, Mengyue and Li, Haoxuan and Wang, Yisen and Zhang, Haifeng and Wang, Jun},
  journal={arXiv preprint arXiv:2602.01869},
  year={2026}
}

@article{yang2026autoskill,
  title={Autoskill: Experience-driven lifelong learning via skill self-evolution},
  author={Yang, Yutao and Li, Junsong and Pan, Qianjun and Zhan, Bihao and Cai, Yuxuan and Du, Lin and Zhou, Jie and Chen, Kai and Chen, Qin and Li, Xin and others},
  journal={arXiv preprint arXiv:2603.01145},
  year={2026}
}

@article{ni2026trace2skill,
  title={Trace2skill: Distill trajectory-local lessons into transferable agent skills},
  author={Ni, Jingwei and Liu, Yihao and Liu, Xinpeng and Sun, Yutao and Zhou, Mengyu and Cheng, Pengyu and Wang, Dexin and Zhao, Erchao and Jiang, Xiaoxi and Jiang, Guanjun},
  journal={arXiv preprint arXiv:2603.25158},
  year={2026}
}

@article{xia2026skillrl,
  title={Skillrl: Evolving agents via recursive skill-augmented reinforcement learning},
  author={Xia, Peng and Chen, Jianwen and Wang, Hanyang and Liu, Jiaqi and Zeng, Kaide and Wang, Yu and Han, Siwei and Zhou, Yiyang and Zhao, Xujiang and Chen, Haifeng and others},
  journal={arXiv preprint arXiv:2602.08234},
  year={2026}
}

@inproceedings{yuan2024craft,
  title={Craft: Customizing llms by creating and retrieving from specialized toolsets},
  author={Yuan, Lifan and Chen, Yangyi and Wang, Xingyao and Fung, Yi and Peng, Hao and Ji, Heng},
  booktitle={International Conference on Learning Representations},
  volume={2024},
  pages={40097--40125},
  year={2024}
}

@article{yao2026skillrise,
  title={Skillrise: Agentic reinforcement learning for cross-task skill evolution},
  author={Yao, Zhiyuan and Chen, Yuxin and Lu, Zhengxi and Xu, Zishan and Sun, Yueqing and Guo, Yifu and Lu, Yuquan and Cai, Zhengzhou and Zhang, Kangning and Han, Zhuowen and others},
  journal={arXiv preprint arXiv:2607.26784},
  year={2026}
}

@misc{openai2025gpt5,
  author       = {{OpenAI}},
  title        = {{GPT-5 System Card}},
  year         = {2025},
  howpublished = {System card},
  url          = {https://openai.com/index/gpt-5-system-card/}
}

@misc{anthropic2026claudeopus48,
  author       = {{Anthropic}},
  title        = {{Claude Opus 4.8 System Card}},
  year         = {2026},
  howpublished = {System card},
  url          = {https://www.anthropic.com/claude-opus-4-8-system-card}
}

@misc{google2026gemini37flash,
  author       = {{Google DeepMind}},
  title        = {{Gemini 3.7 Flash Model Card}},
  year         = {2026},
  month        = aug,
  howpublished = {Model card},
  url          = {https://deepmind.google/models/model-cards/gemini-3-7-flash/}
}

@inproceedings{reimers2019sentence,
  title={Sentence-bert: Sentence embeddings using siamese bert-networks},
  author={Reimers, Nils and Gurevych, Iryna},
  booktitle={Proceedings of the 2019 conference on empirical methods in natural language processing and the 9th international joint conference on natural language processing (EMNLP-IJCNLP)},
  pages={3982--3992},
  year={2019}
}

@article{shao2024deepseekmath,
  title={Deepseekmath: Pushing the limits of mathematical reasoning in open language models},
  author={Shao, Zhihong and Wang, Peiyi and Zhu, Qihao and Xu, Runxin and Song, Junxiao and Bi, Xiao and Zhang, Haowei and Zhang, Mingchuan and Li, YK and Wu, Yang and others},
  journal={arXiv preprint arXiv:2402.03300},
  year={2024}
}

@article{shinn2023reflexion,
  title={Reflexion: Language agents with verbal reinforcement learning},
  author={Shinn, Noah and Cassano, Federico and Gopinath, Ashwin and Narasimhan, Karthik and Yao, Shunyu},
  journal={Advances in neural information processing systems},
  volume={36},
  pages={8634--8652},
  year={2023}
}

@inproceedings{li2016diversity,
  title={A diversity-promoting objective function for neural conversation models},
  author={Li, Jiwei and Galley, Michel and Brockett, Chris and Gao, Jianfeng and Dolan, William B},
  booktitle={Proceedings of the 2016 conference of the North American chapter of the association for computational linguistics: human language technologies},
  pages={110--119},
  year={2016}
}

@article{wang2024agent,
  title={Agent workflow memory},
  author={Wang, Zora Zhiruo and Mao, Jiayuan and Fried, Daniel and Neubig, Graham},
  journal={arXiv preprint arXiv:2409.07429},
  year={2024}
}

@article{majumder2023clin,
  title={Clin: A continually learning language agent for rapid task adaptation and generalization},
  author={Majumder, Bodhisattwa Prasad and Mishra, Bhavana Dalvi and Jansen, Peter and Tafjord, Oyvind and Tandon, Niket and Zhang, Li and Callison-Burch, Chris and Clark, Peter},
  journal={arXiv preprint arXiv:2310.10134},
  year={2023}
}

@article{chen2024automanual,
  title={Automanual: Constructing instruction manuals by llm agents via interactive environmental learning},
  author={Chen, Minghao and Li, Yihang and Yang, Yanting and Yu, Shiyu and Lin, Binbin and He, Xiaofei},
  journal={Advances in Neural Information Processing Systems},
  volume={37},
  pages={589--631},
  year={2024}
}

@inproceedings{suzgun2026dynamic,
  title={Dynamic cheatsheet: Test-time learning with adaptive memory},
  author={Suzgun, Mirac and Yuksekgonul, Mert and Bianchi, Federico and Jurafsky, Dan and Zou, James},
  booktitle={Proceedings of the 19th Conference of the European Chapter of the Association for Computational Linguistics (Volume 1: Long Papers)},
  pages={7080--7106},
  year={2026}
}

@article{yao2022react,
  title={React: Synergizing reasoning and acting in language models},
  author={Yao, Shunyu and Zhao, Jeffrey and Yu, Dian and Du, Nan and Shafran, Izhak and Narasimhan, Karthik and Cao, Yuan},
  journal={arXiv preprint arXiv:2210.03629},
  year={2022}
}

@inproceedings{gur2024real,
  title={A real-world webagent with planning, long context understanding, and program synthesis},
  author={Gur, Izzeddin and Furuta, Hiroki and Huang, Austin and Safdari, Mustafa and Matsuo, Yutaka and Eck, Douglas and Faust, Aleksandra},
  booktitle={International Conference on Learning Representations},
  volume={2024},
  pages={52690--52717},
  year={2024}
}

@article{yang2024swe,
  title={Swe-agent: Agent-computer interfaces enable automated software engineering},
  author={Yang, John and Jimenez, Carlos and Wettig, Alexander and Lieret, Kilian and Yao, Shunyu and Narasimhan, Karthik and Press, Ofir},
  journal={Advances in Neural Information Processing Systems},
  volume={37},
  pages={50528--50652},
  year={2024}
}

@inproceedings{zheng2024synapse,
  title={Synapse: Trajectory-as-exemplar prompting with memory for computer control},
  author={Zheng, Longtao and Wang, Rundong and Wang, Xinrun and An, Bo},
  booktitle={International Conference on Learning Representations},
  volume={2024},
  pages={19036--19066},
  year={2024}
}

@article{wang2026ragen,
  title={Ragen-2: Reasoning collapse in agentic rl},
  author={Wang, Zihan and Gui, Chi and Jin, Xing and Wang, Qineng and Liu, Licheng and Wang, Kangrui and Chen, Shiqi and Li, Linjie and Yang, Zhengyuan and Zhang, Pingyue and others},
  journal={arXiv preprint arXiv:2604.06268},
  year={2026}
}

@book{cover1991elements,
  title={Elements of information theory},
  author={Cover, Thomas M and Thomas, Joy A and Kieffer, John},
  volume={2},
  year={1991},
  publisher={wiley New York}
}

@article{zhang2018generating,
  title={Generating informative and diverse conversational responses via adversarial information maximization},
  author={Zhang, Yizhe and Galley, Michel and Gao, Jianfeng and Gan, Zhe and Li, Xiujun and Brockett, Chris and Dolan, Bill},
  journal={Advances in Neural Information Processing Systems},
  volume={31},
  year={2018}
}

@inproceedings{pan2020adversarial,
  title={Adversarial mutual information for text generation},
  author={Pan, Boyuan and Yang, Yazheng and Liang, Kaizhao and Kailkhura, Bhavya and Jin, Zhongming and Hua, Xian-Sheng and Cai, Deng and Li, Bo},
  booktitle={International Conference on Machine Learning},
  pages={7476--7486},
  year={2020},
  organization={PMLR}
}

@article{eysenbach2018diversity,
  title={Diversity is all you need: Learning skills without a reward function},
  author={Eysenbach, Benjamin and Gupta, Abhishek and Ibarz, Julian and Levine, Sergey},
  journal={arXiv preprint arXiv:1802.06070},
  year={2018}
}

@article{oord2018representation,
  title={Representation learning with contrastive predictive coding},
  author={Oord, Aaron van den and Li, Yazhe and Vinyals, Oriol},
  journal={arXiv preprint arXiv:1807.03748},
  year={2018}
}

@article{hjelm2018learning,
  title={Learning deep representations by mutual information estimation and maximization},
  author={Hjelm, R Devon and Fedorov, Alex and Lavoie-Marchildon, Samuel and Grewal, Karan and Bachman, Phil and Trischler, Adam and Bengio, Yoshua},
  journal={arXiv preprint arXiv:1808.06670},
  year={2018}
}

@article{chen2016infogan,
  title={Infogan: Interpretable representation learning by information maximizing generative adversarial nets},
  author={Chen, Xi and Duan, Yan and Houthooft, Rein and Schulman, John and Sutskever, Ilya and Abbeel, Pieter},
  journal={Advances in neural information processing systems},
  volume={29},
  year={2016}
}

@article{carl,
  title={Carl: Critical action focused reinforcement learning for multi-step agent},
  author={Shen, Leyang and Zhang, Yang and Ling, Chun Kai and Zhao, Xiaoyan and Chua, Tat-Seng},
  journal={arXiv e-prints},
  pages={arXiv--2512},
  year={2025}
}

@article{zhang2026nextmem,
  title={Nextmem: Towards latent factual memory for llm-based agents},
  author={Zhang, Zeyu and Li, Rui and Zhao, Xiaoyan and Zhang, Yang and Wang, Wenjie and Chen, Xu and Chua, Tat-Seng},
  journal={arXiv preprint arXiv:2603.15634},
  year={2026}
}
